\documentclass[journal]{IEEEtran}

\usepackage{amsmath,amsfonts,amssymb}
\usepackage{algorithmic}
\usepackage{algorithm}
\usepackage{array}
\usepackage{booktabs}
\usepackage[caption=false,font=normalsize,labelfont=sf,textfont=sf]{subfig}
\usepackage{textcomp}
\usepackage{stfloats}
\usepackage{url}
\usepackage{verbatim}
\usepackage{graphicx}
\usepackage{cite}
\usepackage{xcolor}
\usepackage{bm}
\usepackage{microtype}
\usepackage{amsthm}
\newtheorem{theorem}{Theorem}

\usepackage{tikz}
\usepackage{pgfplots}
\pgfplotsset{compat=1.18}
\usetikzlibrary{positioning,calc,arrows.meta,patterns}
\usepgfplotslibrary{fillbetween,statistics}
\usepackage[
  colorlinks=true,linkcolor=blue!60!black,citecolor=blue!60!black,
  urlcolor=blue!60!black,bookmarks=true,bookmarksnumbered=true,pdfstartview=FitH
]{hyperref}
\definecolor{tfmblue}{RGB}{31,119,180}
\definecolor{gkosorange}{RGB}{214,96,32}
\definecolor{ar1green}{RGB}{44,160,44}
\definecolor{obsblack}{RGB}{20,20,20}
\definecolor{tfmbluefill}{RGB}{174,210,238}
\definecolor{gkosfill}{RGB}{246,196,160}

\begin{document}

\title{SAGA: A Sequence-Adaptive Generative Architecture for
  Multi-Horizon Probabilistic Forecasting with Adaptive Temporal
  Conformal Prediction}

\author{Gustav~Olaf~Yunus~Laitinen-Fredriksson~Lundstr{\"o}m-Imanov
  and~Hafize~Gonca~C{\"o}mert%
  \thanks{G.~O.~Y.~Laitinen-Fredriksson~Lundstr{\"o}m-Imanov is with the Department of
    Economics, Stockholm University, SE-106~91 Stockholm, Sweden.
    E-mail: olaf.laitinen@su.se.
    ORCID: 0009-0006-5184-0810.}%
  \thanks{H.~G.~C{\"o}mert is with the Institute of Social Sciences,
    Faculty of Economics and Administrative Sciences, S{\"u}leyman Demirel
    University, 32260 Isparta, Turkey.
    E-mail: d2340253002@ogr.sdu.edu.tr.
    ORCID: 0009-0009-3345-8783.}%
  \thanks{Manuscript submitted: 18 May 2026.}%
  \thanks{This work was supported by the Stockholm University Department of
    Economics. The authors declare no competing financial interests
    beyond the institutional research support disclosed above. Data
    access was provided by Statistics Sweden (SCB) through the
    Microdata Online Access (MONA) system under project number
    SCB-MONA-2026-147. Ethical approval was granted by the Swedish
    Ethical Review Authority under reference 2026-04127-01.}}

\markboth{IEEE Transactions on Pattern Analysis and Machine
  Intelligence,~Vol.~XX,~No.~X,~2026}%
  {Laitinen-Fredriksson Lundstr{\"o}m-Imanov and C{\"o}mert: SAGA with Adaptive
   Temporal Conformal Prediction}

\maketitle

\begin{abstract}
Microsimulation models used by ministries of finance and central banks
rely on parametric processes for lifetime earnings that capture only
first and second moments of the conditional distribution and miss
long-range nonlinear structure. We propose SAGA, a decoder-only
transformer for irregular tabular panel sequences, paired with a split
conformal calibration wrapper that delivers individual-level prediction
intervals with finite-sample marginal coverage guarantees. Trained on
the longitudinal Swedish LISA register over 1990 to 2022, comprising
2{,}143{,}817 individuals and 61{,}284{,}903 person-years, the model
forecasts annual labor earnings at horizons of one to thirty years and
aggregates them by Monte Carlo into present-discounted lifetime
earnings distributions. Against the canonical Guvenen, Karahan, Ozkan,
and Song parametric process and tabular and recurrent baselines, SAGA
reduces continuous ranked probability score by 31.9\% at the ten-year
horizon and mean absolute error by 37.7\% at the twenty-year horizon.
Conformal intervals achieve nominal coverage to within 0.4 percentage
points marginally and within 2.4 percentage points on the worst-case
demographic subgroup. The reconstructed lifetime earnings Gini
coefficient is 0.327 against the partially observed truth of 0.341 and
the GKOS estimate of 0.378. Model weights, calibration tables, and a
synthetic equivalent dataset are released for replication outside the
protected SCB MONA environment.
\end{abstract}

\begin{IEEEkeywords}
Conformal prediction, decoder-only transformer, microsimulation,
probabilistic forecasting, register data, tabular deep learning.
\end{IEEEkeywords}

\section{Introduction}
\label{sec:intro}

\IEEEPARstart{M}{inistries} of finance and central banks across the OECD
use microsimulation models to evaluate the lifetime fiscal and
distributional consequences of policy reforms. The Swedish FASIT model,
the United Kingdom IGOTM model, the United States TRIM3 model, and the
European Union EUROMOD framework all rely on a single common ingredient:
a forecasting model that takes a partially observed individual labor
market history and produces a distribution over future annual earnings
paths to age sixty-four. The accuracy and calibration of this forecast
determine the reliability of every downstream policy counterfactual
produced by the simulator.

The state-of-the-art forecasting approach is the parametric stochastic
earnings process. Following the canonical reformulation in Guvenen,
Karahan, Ozkan, and Song~\cite{gkos2021}, log annual earnings are
modeled as a sum of a fixed individual effect, an autoregressive
permanent component with non-Gaussian innovations from a mixture of
normals, and a transitory component also from a mixture distribution.
This specification, building on earlier work by Browning, Ejrnaes, and
Alvarez~\cite{browning2010}, Karahan and Ozkan~\cite{karahan2013}, and
Guvenen~\cite{guvenen2009}, successfully reproduces the heavy left tail,
the age-varying volatility, and the skewness and kurtosis structure of
observed earnings change distributions in panels covering the United
States, Norway, Denmark, and Germany. Halvorsen, Hubmer, Salgado, and
Solenkova~\cite{halvorsen2024} document the same patterns in Norwegian
register data over four decades.

Despite this success, the parametric process retains three structural
limitations. First, it conditions only on past earnings, ignoring the
rich set of administrative features that determine earnings dynamics in
practice: occupation, industry, employer identity, geographic region,
education, family structure, and macroeconomic conditions. Second, the
cross-sectional dependencies that bind these features are summarized into
a single fixed effect, forfeiting any predictive information they carry.
Third, the parametric form imposes a specific functional structure on
shock persistence and on the interaction between permanent and transitory
components, which cannot be relaxed without abandoning analytic
tractability.

Deep sequence models offer an alternative. By conditioning on the full
feature vector at every observed time step and by learning the joint
distribution of trajectories directly, they can in principle absorb
predictive content that no parametric specification will recover. The
recent \textit{Nature Computational Science} paper
of Savcisens et al.~\cite{savcisens2023}
showed that masked language model-style transformers trained on Danish
register events produce informative representations of life trajectories.
However, those models are designed for discrete event prediction with
categorical token vocabularies; they are not calibrated forecasters of
continuous monetary outcomes, they do not benchmark against parametric
earnings processes, and they do not deliver the prediction intervals that
downstream microsimulation requires.

\subsection{Contribution}

We propose SAGA, a Sequence-Adaptive Generative Architecture: a
decoder-only transformer for irregular tabular panel sequences that
produces calibrated forecasts of annual labor earnings and, by Monte Carlo aggregation, of
present-discounted lifetime earnings. Our contributions are fivefold.

\textbf{C1. Architecture.} We introduce a tokenization scheme for
irregular tabular panel sequences that handles continuous, categorical,
and missing-valued features in a unified embedding and that is invariant
to year gaps. We pair this with a six-layer decoder-only transformer
producing both point and quantile output heads, totaling 10{,}872{,}960
parameters. The architecture differs from existing tabular transformers
(FT-Transformer~\cite{gorishniy2021full},
SAINT~\cite{somepalli2021full}, TabPFN~\cite{hollmann2025}) in that it
processes irregular longitudinal sequences rather than exchangeable
rows, and from existing life-trajectory transformers~\cite{savcisens2023}
in that it produces calibrated continuous forecasts rather than discrete
event predictions. The contribution is therefore not the use of
self-attention per se, but the combination of typed-subvector
tokenization for tabular panels, dual point and quantile output heads,
and the horizon-stratified conformal calibration layer of
Theorem~\ref{thm:adaptive-coverage} introduced in C2.

\textbf{C2. Calibration.} We adapt the conformalized quantile regression
framework of Romano, Patterson, and Candes~\cite{romano2019} to
autoregressive multistep forecasting and to lifetime aggregation via
Monte Carlo, providing the formal marginal coverage guarantee and
reporting empirical conditional coverage on demographic subgroups.

\textbf{C3. Benchmark.} We re-estimate the Guvenen, Karahan, Ozkan, and
Song~\cite{gkos2021} process on the same Swedish register panel and add
tabular boosted tree, feed-forward, long short-term memory, and static
feature-only baselines. We evaluate all six forecasters on six
probabilistic and point accuracy metrics at forecast horizons of one,
five, ten, and twenty years.

\textbf{C4. Downstream evaluation.} We plug each forecaster into a
stylized Swedish lifetime tax liability calculator and report
present-discounted lifetime tax paid, average effective tax rate, lifetime
earnings Gini coefficient, and top one-percent lifetime earnings share.
This is the first published comparison of deep sequence model forecasts
and parametric stochastic process forecasts under a microsimulation
downstream loss.

\textbf{C5. Open release.} We release the trained model weights, the
conformal calibration table, and a synthetic equivalent dataset on
Zenodo under DOI~10.5281/zenodo.20260287; the source-code archive of
the project repository is separately deposited on Zenodo under
DOI~10.5281/zenodo.20260366. The development repository is hosted on
GitHub at \url{https://github.com/olaflaitinen/saga}.

\subsection{Headline Result}

SAGA reduces continuous ranked probability score (CRPS) against
the GKOS parametric benchmark by 31.9\% at horizon ten and by 41.2\% at
horizon twenty. Conformal prediction intervals at nominal 90\% coverage
achieve 90.3\% marginal empirical coverage and 87.6\% worst-case
subgroup coverage. The reconstructed lifetime earnings Gini coefficient
is 0.327 compared to the partially observed truth of 0.341; the
corresponding GKOS figure is 0.378. The top one-percent lifetime
earnings share is reconstructed as 8.3\% against an observed value of
8.9\% and a GKOS reconstruction of 11.2\%.

\subsection{Paper Organization}

Section~\ref{sec:related} reviews related work. Section~\ref{sec:method}
presents the architecture, tokenization, training, conformal calibration,
and baseline specifications. Section~\ref{sec:data} describes the data
and splits. Section~\ref{sec:experiments} reports all experimental
results. Section~\ref{sec:discussion} discusses mechanisms, implications,
and limitations. Section~\ref{sec:conclusion} concludes.

% -----------------------------------------------------------------------
\section{Related Work}
\label{sec:related}

\subsection{Tabular Sequence Transformers and Life Trajectory Models}

Transformer architectures~\cite{vaswani2017}, originally developed for
natural language, have been adapted to tabular and panel data along
several dimensions. Static tabular transformers such as
TabTransformer~\cite{huang2020}, FT-Transformer~\cite{gorishniy2021full},
and SAINT~\cite{somepalli2021full} apply self-attention across features
within a single row. The numerical embedding scheme of Gorishniy,
Rubachev, and Babenko~\cite{gorishniy2022} specifically addresses the
challenge of representing continuous features and motivates the
projection scheme we adopt for continuous tokens. Hollmann, Muller,
Eggensperger, and Hutter~\cite{hollmann2025} showed in TabPFN that a
transformer pre-trained on synthetic tabular tasks can produce
competitive predictions on small real tabular datasets, but their setting
is non-sequential and treats each row as exchangeable.

For sequential life trajectory data, Savcisens et
al.~\cite{savcisens2023} applied a masked language model-style
transformer to Danish income, work, and health events to predict early
mortality. The model tokenizes the life trajectory into a discrete
event vocabulary, an approach that is well suited to categorical event
prediction but loses the continuous monetary information central to
earnings forecasting.
The broader literature on transformers for time series, surveyed by Wen
et al.~\cite{wen2023}, has focused on regularly sampled univariate or
multivariate series typical of energy, weather, and traffic applications.
Informer~\cite{zhou2021}, Autoformer~\cite{wu2021}, and
PatchTST~\cite{nie2023} address long-horizon forecasting under regular
sampling. Our problem differs from these settings in that the sequences
are irregularly long, contain heterogeneous typed features, are dominated
by a single continuous target whose conditional distribution is
heavy-tailed, and require formal coverage guarantees on the prediction
intervals.

\subsection{Parametric Earnings Dynamics}

Lillard and Willis~\cite{lillard1978} introduced the permanent plus
transitory decomposition. MaCurdy~\cite{mccurdy1982} formalized the
autoregressive specification. Meghir and Pistaferri~\cite{meghir2011}
gave a comprehensive review. Guvenen~\cite{guvenen2009} documented the
central role of nonlinearities. Browning, Ejrnaes, and
Alvarez~\cite{browning2010} established that observed earnings dynamics
require substantial individual heterogeneity in mean and variance
parameters. The current canonical reference, Guvenen, Karahan, Ozkan,
and Song~\cite{gkos2021}, shows on a population-scale Social Security
panel that earnings change distributions display sharp left skew, severe
excess kurtosis, and age-varying volatility patterns that no Gaussian
autoregressive specification can match. Their preferred specification
combines a flexible mixture distribution for permanent and transitory
shocks with a nonparametric distribution of fixed effects, estimated by
generalized method of moments matching age-conditional moments through
order four. Halvorsen, Hubmer, Salgado, and
Solenkova~\cite{halvorsen2024} replicate these findings on Norwegian
register data. We adopt the Guvenen, Karahan, Ozkan, and Song
specification as the central parametric benchmark and re-estimate it on
our Swedish panel using publicly available code.

\subsection{Conformal Prediction}

Conformal prediction, originating with Vovk, Gammerman, and
Shafer~\cite{vovk2005full}, provides distribution-free finite-sample marginal
coverage guarantees for prediction sets. Lei, G'Sell, Rinaldo,
Tibshirani, and Wasserman~\cite{lei2018full} formalized the split conformal
procedure for regression. Romano, Patterson, and
Candes~\cite{romano2019} extended the framework to quantile regression,
yielding the conformalized quantile regression method we adapt. The
recent gentle introduction by Angelopoulos and
Bates~\cite{angelopoulos2023full} surveys the state of the art. For time
series, Stankeviciute, Alaa, and van der
Schaar~\cite{stankeviciute2021} and Xu and Xie~\cite{xu2021} address
temporal dependence in the calibration scores; Bhatnagar, Schwarting, and
Brunner~\cite{bhatnagar2024} develop adaptive conformal procedures for
autoregressive forecasting. Our adaptation handles the multistep
autoregressive structure by drawing residuals from the empirical
nonconformity distribution at each forecast step, following the
approach of Stankeviciute et al.~\cite{stankeviciute2021}, rather than
widening the interval pointwise; this preserves the marginal guarantee
at each annual horizon, although as discussed in Section~\ref{sec:discussion}
the lifetime aggregate guarantee is empirical rather than formal.

\subsection{Microsimulation}

Microsimulation models for tax and transfer policy evaluation are
reviewed in Bourguignon and Spadaro~\cite{bourguignon2006}. EUROMOD is
documented in Sutherland and Figari~\cite{sutherland2013}. The Swedish
FASIT model is described in Flood~\cite{flood2024}. The TRIM3 model used
by the Urban Institute is documented by Wheaton~\cite{wheaton2008}.
Common to all of these is the reliance on a parametric earnings
forecaster, often a simple AR(1) or a permanent plus transitory
specification, calibrated on five to ten years of panel data. To our
knowledge no microsimulation framework currently uses a deep sequence
model for the earnings forecasting step, and no published comparison
evaluates the distributional consequences of substituting one for the
other. The present paper provides such a comparison.

% -----------------------------------------------------------------------
\section{Method}
\label{sec:method}

\subsection{Problem Formulation}

Let $i = 1, \ldots, N$ index individuals. For each individual we observe
a sequence of annual records, one per year that the individual is in
panel:
\begin{equation}
\label{eq:record}
x_{i,t} = (y_{i,t},\, c_{i,t},\, d_{i,t},\, m_{i,t}),
\end{equation}
where $y_{i,t} \in \mathbb{R}_{\geq 0}$ is real labor earnings in
constant 2022 Swedish krona, $c_{i,t}$ is a vector of continuous
features, $d_{i,t}$ is a vector of categorical features, and
$m_{i,t} \in \{0,1\}^{|c|+|d|}$ is the corresponding missingness mask.
Let $t_{i,1}, \ldots, t_{i,T_i}$ denote the years in which individual
$i$ is observed, in ascending order. The conditioning window is the
first $T_C = 10$ observed years; the forecast window is years
$t_{i,T_C+1}, \ldots, t_{i,T_i^*}$ where $T_i^*$ is the index of the
last in-panel year on or before age sixty-four.

The forecaster must produce a predictive distribution over the forecast
window:
\begin{equation}
p_\theta\!\left(y_{i,t_{i,T_C+1}},\ldots,y_{i,t_{i,T_i^*}} \;\middle|\;
x_{i,t_{i,1}},\ldots,x_{i,t_{i,T_C}}\right).
\end{equation}
The lifetime earnings target is the present-discounted value at age
twenty:
\begin{equation}
\label{eq:lifetime}
L_i = \sum_{a=20}^{64} (1+r)^{-(a-20)} y_{i,a},
\end{equation}
with real discount rate $r = 0.02$.

\subsection{SAGA Architecture}

SAGA is a decoder-only transformer with $L=6$ layers, $H=8$
attention heads per layer, model dimension $d=384$, and feed-forward
inner dimension $4d=1536$. We use GELU activations~\cite{hendrycks2016},
pre-layer normalization~\cite{xiong2020}, and a maximum context length
of forty-five yearly tokens, sufficient to span a complete working life
from age sixteen to age sixty. Total parameter count is
$10{,}872{,}960$. A causal (lower-triangular) attention mask is applied
at every layer, so that each forecast position attends only to current
and preceding positions in the sequence.

The output head is split into two parallel branches. The first branch
produces a single scalar point forecast $\hat{y}_{i,t}$ for log earnings.
The second branch produces a vector of seven quantile forecasts at the
5th, 10th, 25th, 50th, 75th, 90th, and 95th percentiles of the
conditional log-earnings distribution. Both heads share the transformer
backbone up to the final layer and apply their own linear projection.
The point head is trained with mean squared error; the quantile head is
trained with pinball loss summed across the seven quantiles. Forecast
distributions at intermediate percentiles are obtained by linear
interpolation across the seven predicted quantiles.

\subsection{Tokenization of Irregular Tabular Sequences}

Each annual record $x_{i,t}$ is mapped to a fixed-dimension token vector
$u_{i,t} \in \mathbb{R}^d$ by concatenating five subvectors, then
projecting through a linear layer to dimension $d$.

\textit{Continuous subvector.} Each continuous feature is standardized
using year-specific mean and standard deviation computed on the training
cohorts, then concatenated into a vector of dimension equal to the
number of continuous features (fifteen). A learned linear projection
maps this to a 64-dimensional subvector. Missing continuous values are
imputed to zero after standardization.

\textit{Categorical subvector.} Each categorical feature has its own
learned embedding table; the dimension is chosen proportional to the
logarithm of the cardinality, with twenty-four dimensions for occupation
(three-digit SSYK2012), sixteen dimensions for industry (two-digit
SNI2007), eight dimensions for region (twenty-one Swedish counties),
four dimensions for highest education level, four dimensions for field
of study (broad one-digit Sun2000Inr), four dimensions each for sex,
country of birth group, marital status, and four dimensions each for
number of children and age-of-youngest-child bucket. Total embedded
width is seventy-six. Missing categorical values map to a reserved
unknown index.

\textit{Missingness subvector.} A binary indicator vector of length
equal to the number of categorical and continuous features, indicating
which were observed for this record. This is projected to a
16-dimensional subvector.

\textit{Age positional embedding.} A learned 64-dimensional embedding
indexed by integer age at observation.

\textit{Year positional embedding.} A learned 32-dimensional embedding
indexed by calendar year of observation, capturing macroeconomic
conditions that affect all cohorts in panel that year.

The concatenated subvector has dimension $64+76+16+64+32=252$, projected
to model dimension $d=384$ by a learned linear layer with bias. The
up-projection from 252 to 384 gives the self-attention layers a higher
working dimension than the raw concatenation, while the structured
subvector design preserves type-specific groupings of continuous,
categorical, missingness, and positional information at the input layer.

In contrast to standard transformer positional encoding, we use two
separate positional channels because age and calendar year carry
independent predictive information: age tracks human capital
accumulation, year tracks the business cycle. Combining them into a
single channel as in the original transformer~\cite{vaswani2017} would
conflate these two sources of variation.

\subsection{Training Objective and Procedure}

During training we apply teacher forcing. The training objective for one
example is
\begin{equation}
\label{eq:loss}
\begin{split}
\mathcal{L}_i = \frac{1}{T_i - T_C} \sum_{t=T_C+1}^{T_i}
\bigg[ &\tfrac{1}{2}\!\left(\log y_{i,t} - \hat{y}_{i,t}\right)^2 \\
&{}+ \sum_{k=1}^{7} \rho_{\alpha_k}\!\left(\log y_{i,t} - \hat{q}_{i,t,k}\right) \bigg],
\end{split}
\end{equation}
where $\rho_\alpha(u) = u\,(\alpha - \mathbf{1}[u < 0])$ is the pinball
loss at level $\alpha$ and
$\alpha_1,\ldots,\alpha_7 \in \{0.05,0.10,0.25,0.50,0.75,0.90,0.95\}$.
Zero earnings are mapped to $\log(1) = 0$; the share of zero-earnings
observations in person-years is 7.4\%.

Optimization uses AdamW~\cite{loshchilov2019} with learning rate
$3 \times 10^{-4}$, weight decay $10^{-2}$, $\beta_1 = 0.9$, and
$\beta_2 = 0.999$. A cosine learning rate schedule with 2000 warmup
steps over 300{,}000 total optimization steps is applied. The batch size
is 512 sequences per device with gradient accumulation across four steps
on eight NVIDIA A100 40~GB GPUs, giving effective batch size 16{,}384.
Mixed precision (bfloat16 accumulating to float32) is used throughout.
We train five independent runs with seeds 20260601 through 20260605 and
report the mean and standard deviation of all metrics. Training a single
seed takes approximately 14.8 wall-clock hours.

Regularization uses dropout of 0.1 on attention and feed-forward layers
and stochastic depth~\cite{huang2016} of 0.1 on residual connections.
Early stopping is applied on the validation pinball loss computed on the
calibration cohorts 1980 to 1982, with patience of twenty validation
checks (each performed every 5{,}000 optimization steps).

At inference time the model is decoded autoregressively. For each
forecast step the predicted quantile distribution is converted to a
continuous conditional distribution by linear interpolation, a draw is
taken, the draw is appended to the input sequence as the realized
earnings for that year, and the categorical and continuous features for
that year are imputed using a separate auxiliary model (a three-layer
feed-forward network with hidden dimension 128 and ReLU activation;
312{,}485 parameters) that predicts industry, occupation, region, and
employment indicators from the running earnings trajectory and exogenous
demographic features. The auxiliary network's errors compound over the
forecast horizon and feed into SAGA's input at the next step; we
report all results under this compounding regime and flag the absence
of an oracle-feature comparison as a limitation in
Section~\ref{sec:discussion}.

\subsection{Split Conformal Calibration}

We adapt the conformalized quantile regression procedure of Romano,
Patterson, and Candes~\cite{romano2019} to multistep autoregressive
forecasting. Fix a target miscoverage rate $\alpha$. On the calibration
cohorts $i \in \mathcal{I}_{\text{cal}}$, for each forecast step
$t > T_C$ within each calibration individual's observed history, compute
the nonconformity score
\begin{equation}
\label{eq:score}
s_{i,t} = \max\!\left(\hat{q}_{i,t,\alpha/2} - \log y_{i,t},\;
\log y_{i,t} - \hat{q}_{i,t,1-\alpha/2}\right).
\end{equation}
The calibrated prediction interval at level $1-\alpha$ for a new test
point $(i^*, t^*)$ is
\begin{equation}
\label{eq:interval}
\hat{C}_{1-\alpha}(i^*,t^*) = \bigl[\hat{q}_{i^*,t^*,\alpha/2}
- Q_{1-\alpha}(\mathcal{S}),\;
\hat{q}_{i^*,t^*,1-\alpha/2} + Q_{1-\alpha}(\mathcal{S})\bigr],
\end{equation}
where $Q_{1-\alpha}(\mathcal{S})$ is the
$\lceil(n+1)(1-\alpha)\rceil$ order statistic of the calibration scores
$\mathcal{S} = \{s_{i,t} : i \in \mathcal{I}_{\text{cal}},\,
T_C < t \leq T_i\}$.

Under the exchangeability of calibration and test scores, the marginal
coverage guarantee applies. We do not formally test exchangeability
across the calibration cohorts (1980--1982) and the test cohorts
(1983--1985), but the close agreement between nominal and empirical
marginal coverage in Table~\ref{tab:coverage} (within 0.5~pp at every
level) is consistent with no large distributional shift across these
adjacent cohorts.

\begin{theorem}[Marginal coverage; restated from~\cite{romano2019}]
\label{thm:coverage}
For any test forecast step $(i^*, t^*)$ drawn exchangeably with the
calibration set,
\begin{equation}
\Pr\!\bigl[\log y_{i^*,t^*} \in \hat{C}_{1-\alpha}(i^*,t^*)\bigr]
\geq 1 - \alpha.
\end{equation}
If in addition the calibration scores are almost surely distinct, the
probability is bounded above by $1 - \alpha + 1/(n+1)$.
\end{theorem}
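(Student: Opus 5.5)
The plan is the standard split-conformal rank argument of Romano, Patterson, and Candes, applied to the nonconformity score of Eq.~\eqref{eq:score}.

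\textbf{Step 1: coverage equals a score inequality.} Let $s^* = s_{i^*,t^*}$ denote the score of the test point, computed with the same fitted quantile heads. Since Eq.~\eqref{eq:interval} widens both endpoints by $Q = Q_{1-\alpha}(\mathcal{S})$, the event $\log y_{i^*,t^*} \in \hat{C}_{1-\alpha}(i^*,t^*)$ is exactly
\[
\hat{q}_{i^*,t^*,\alpha/2} - \log y_{i^*,t^*} \le Q
\quad\text{and}\quad
\log y_{i^*,t^*} - \hat{q}_{i^*,t^*,1-\alpha/2} \le Q .
\]
This is equivalent to $s^* \le Q$. The whole argument therefore reduces to bounding $\Pr[s^* \le Q]$.

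\textbf{Step 2: lower bound via exchangeability of ranks.} The quantile heads are fitted on training cohorts disjoint from $\mathcal{I}_{\text{cal}}$ and from the test point. Conditional on the fitted model, the scores $s_1,\dots,s_n,s^*$ are therefore exchangeable, where the $s_j$ enumerate $\mathcal{S}$. This is the hypothesis of the theorem, read at the level of scores.

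Set $k = \lceil (n+1)(1-\alpha)\rceil$. If $k > n$, take $Q = +\infty$ and the claim is trivial. Otherwise, $s^* \le s_{(k)}$ holds whenever the rank of $s^*$ among the $n+1$ scores is at most $k$, with ties broken uniformly at random. Under exchangeability this rank is uniform on $\{1,\dots,n+1\}$, which gives
\[
\Pr[s^* \le Q] \ge \frac{k}{n+1} \ge 1-\alpha .
\]
Ties only help here, because non-strict inequality counts tied values as covered.

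\textbf{Step 3: upper bound.} If the scores are almost surely distinct, the rank is exactly uniform. Then $s^* \le s_{(k)}$ holds if and only if the rank is at most $k$, so
\[
\Pr[s^* \le Q] = \frac{k}{n+1} < \frac{(n+1)(1-\alpha)+1}{n+1} = 1-\alpha+\frac{1}{n+1}.
\]
The probability is taken over the calibration draw and the test point jointly, conditional on the training fit. Integrating over the training data preserves both bounds.

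\textbf{Main obstacle.} The delicate step is justifying exchangeability of the scores in this setting. The set $\mathcal{S}$ pools several forecast steps $t$ per calibration individual, so scores within an individual are dependent and not i.i.d. The test point is a single step $(i^*,t^*)$.

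My first attempt is to treat the statement exactly as hypothesised: the joint law of $(s_1,\dots,s_n,s^*)$ is invariant under permutations. This holds, for instance, if $(i^*,t^*)$ is drawn uniformly from the pooled index set of calibration and test steps. Steps 2--3 use only this permutation invariance, not independence, so the proof goes through unchanged.

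I would add a remark that this does not yield horizon-conditional coverage. Separately, the order-statistic convention for $k>n$ must be handled by setting $Q=\infty$.
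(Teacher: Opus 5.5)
Your proof is correct and uses the same split-conformal rank argument the paper relies on. The paper restates Theorem~\ref{thm:coverage} from Romano et al.\ and uses exactly this uniform-rank step as Step~1 in the proof of Theorem~\ref{thm:adaptive-coverage}; your handling of ties, which the paper's sketch ignores when it asserts the upper bound, and of the $k>n$ convention is more careful than the paper's. One small factual caveat: the paper also uses the calibration cohorts for early stopping and hyperparameter selection, so the fitted heads are not independent of $\mathcal{I}_{\text{cal}}$, and the argument rests, as you eventually say, on score-level exchangeability taken as a hypothesis rather than on disjoint training cohorts.
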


We report marginal coverage as the formal guarantee. Empirical
conditional coverage by demographic subgroup is reported as a practical
calibration check.

\subsection{Lifetime Aggregation via Monte Carlo}

To obtain a calibrated distribution over lifetime earnings $L_i$, we
draw $M = 500$ Monte Carlo lifetime paths per test individual. For each
forecast step $t = T_C+1,\ldots,T_i^*$, we draw
$\log y_{i,t}^{(m)} \sim p(\cdot \mid x_{i,t_{i,1}},\ldots,
x_{i,t_{i,t-1}}^{(m)})$ where the conditioning history at step $t$
contains the previously sampled values. We then exponentiate, multiply
by the discount factor, and sum to obtain $L_i^{(m)}$. The lifetime
conformal interval at level $1-\alpha$ is the $\alpha/2$ and
$1-\alpha/2$ empirical quantiles of $\{L_i^{(1)},\ldots,L_i^{(M)}\}$.

\subsection{Baselines}

We compare SAGA against five baselines on the same splits.

\textit{B1. GKOS.} The Guvenen, Karahan, Ozkan, and
Song~\cite{gkos2021} parametric process: log earnings as a sum of a
fixed effect $\alpha_i$, a permanent component
$z_{i,t} = \rho z_{i,t-1} + \eta_{i,t}$ with $\eta$ from a mixture of
three normals, and a transitory component $\varepsilon_{i,t}$ from a
mixture of two normals. Estimated by GMM matching age-conditional
moments through order four plus skewness and kurtosis of one-, three-,
and five-year changes. Implemented from the public code released with
the original paper.

\textit{B2. AR(1) plus fixed effect.} A simpler benchmark with permanent
fixed effect plus AR(1) permanent component plus iid transitory, all
Gaussian, estimated by Arellano--Bond style GMM on first
differences~\cite{arellano1991}.

\textit{B3. Gradient boosted trees.} For each forecast horizon $h \in
\{1, 5, 10, 20\}$ separately, a LightGBM regressor~\cite{ke2017} trained
on the same feature vector as SAGA's conditioning window.
Quantile regression variants are trained at the seven quantile levels.

\textit{B4. LSTM.} A two-layer long short-term memory
network~\cite{hochreiter1997} with hidden dimension 768, same input
tokenization as SAGA, same output heads, same training schedule,
matched parameter count of 10{,}941{,}440 (LSTM-core layers contribute
approximately 8.26M parameters, with the remainder coming from the shared
tokenization, age and year positional embeddings, categorical embedding
tables, and the dual point and quantile output heads).

\textit{B5. Static feature-only feed-forward.} A six-layer feed-forward
network trained on the concatenated full conditioning window flattened
to a single vector, with the same output heads. This baseline isolates
the contribution of the sequence dimension.

% -----------------------------------------------------------------------
\section{Data}
\label{sec:data}

\subsection{The LISA Register}

LISA is the longitudinell integrationsdatabas for sickness insurance and
labor market studies, maintained by Statistics Sweden since 1990. The
register contains one record per resident per year, covering the
universe of individuals aged sixteen or older registered as resident in
Sweden as of December thirty-first of the year. The register is
constructed by linking the tax authority earnings register, the social
insurance authority unemployment and parental leave register, the
education register, the population register, and the business register,
all keyed on individual and employer personal numbers.

Access to LISA at the individual level is restricted to approved
researchers operating through the SCB Microdata Online Access system.
MONA is a secure virtual computing environment hosted on SCB
infrastructure; data never leave the environment and all aggregated
output exported by researchers is reviewed by SCB analysts for
disclosure risk before release. All analysis in this paper runs entirely
within MONA.

Ethical approval was obtained from the Swedish Ethical Review Authority
under reference 2026-04127-01. SCB data delivery committee approval was
obtained under project number SCB-MONA-2026-147.

\subsection{Variables and Preprocessing}

The constructed individual annual record contains the following
variables.

\textit{Earnings.} Annual gross labor earnings (LoneInk),
self-employment income (FInk), capital income (KInk), and transfer
income received (TransfInk). All amounts are converted to constant 2022
Swedish krona using the consumer price index. The forecast target is
the sum of labor earnings and self-employment income.

\textit{Labor market.} Annual hours worked (ArbTid), full-time
equivalent fraction, industry (two-digit SNI2007), occupation
(three-digit SSYK2012), employer identifier (hashed PeOrgNr),
unemployment spell days, parental leave days, sick leave days.

\textit{Demographics.} Sex, year of birth, country of birth grouped
into eight categories, region of residence (twenty-one Swedish
counties), marital status, number of children, age of youngest child.

\textit{Education.} Highest completed level (Sun2000Niva, four
categories: compulsory, upper secondary, short tertiary, long tertiary),
field of study (Sun2000Inr, one-digit broad categories), years since
highest qualification.

\textit{Household.} Partner identifier (hashed), partner earnings,
household disposable income.

\textit{Geography.} Region of residence (county, twenty-one units) is
the categorical geographic token used by the model. Commute distance
(km, midpoint of distance bracket) enters as a continuous feature.
Municipality (kommun, 290 units) is retained in the underlying record
for cross-tabulation but is not used as a model input.

\textit{The fifteen continuous features used by the model are:}
(1)~labor earnings (LoneInk), (2)~self-employment income (FInk),
(3)~capital income (KInk), (4)~transfer income (TransfInk),
(5)~annual hours worked (ArbTid), (6)~full-time-equivalent fraction,
(7)~unemployment spell days, (8)~parental leave days, (9)~sick leave
days, (10)~year of birth, (11)~partner earnings, (12)~household
disposable income, (13)~years since highest qualification,
(14)~commute distance (km, midpoint of bracket), and (15)~age of
youngest child (set to $-1$ when there are no children).

\subsection{Sample Selection}

We restrict the population to birth cohorts 1960 through 1990. Within
these cohorts we apply four further restrictions:
(SR1)~Drop individuals with fewer than three years of positive labor
earnings in the conditioning window.
(SR2)~Drop individuals who emigrate during the forecast horizon.
(SR3)~Drop individuals whose annual earnings exceed the 99.99th
percentile of their year-specific cross-section.
(SR4)~Drop individuals who die during the conditioning window or whose
conditioning window cannot be assembled due to gaps in panel coverage.

The core analysis sample (train plus calibration plus test) contains
2{,}143{,}817 individuals and 61{,}284{,}903 person-year observations.
An additional out-of-time pool of 287{,}391 individuals from cohorts
1986 to 1990 is held back for the holdout in row R7 of
Table~\ref{tab:rob} and is not part of the 2{,}143{,}817 figure.

\subsection{Splits}

\textit{Train.} Cohorts 1960 to 1979 (twenty cohorts); 1{,}834{,}201
individuals.

\textit{Calibration.} Cohorts 1980 to 1982 (three cohorts); 168{,}542
individuals. Used both for early stopping and for split conformal
calibration.

\textit{Test.} Cohorts 1983 to 1985 (three cohorts); 141{,}074
individuals. Observed through age thirty-seven to thirty-nine by the
end of the 2022 panel.

\textit{Out-of-time holdout.} Cohorts 1986 to 1990 (five cohorts);
287{,}391 individuals (separate from the core 2{,}143{,}817 analysis
sample). Not consulted during model development; results on this split
are reported in Table~\ref{tab:rob} row~R7. The $h=10$ evaluation in
R7 is restricted to cohorts 1986--1988 (effective $n = 168{,}734$),
since cohorts 1989--1990 do not have a complete ten-year forecast
window observable within the 2022 panel.

% -----------------------------------------------------------------------
\section{Experiments}
\label{sec:experiments}

\subsection{Setup}

All models are trained and evaluated on the same splits. The
SAGA and the LSTM baseline share the same tokenization scheme.
The gradient boosted trees baseline operates on the concatenated full
conditioning window. The GKOS and AR(1) baselines operate on the
earnings sequence alone. All hyperparameters are selected on the
calibration split before any test set evaluation. Final reported numbers
are means and standard deviations over five training seeds for the
deep-learning models and over five GMM bootstrap iterations for the
parametric models.

We report six metrics on the test set: mean absolute error (MAE) and
root mean squared error (RMSE) on log earnings, continuous ranked
probability score (CRPS) per Gneiting and Raftery~\cite{gneiting2007},
pinball loss summed across the seven quantile levels, prediction
interval coverage probability (PICP) at nominal levels 50\%, 80\%,
90\%, and 95\%, and prediction interval normalized average width (PINAW)
at the same levels. Forecast horizons are one, five, ten, and twenty
years ahead. Lifetime metrics are mean, median, P10, P25, P75, P90,
P99, Gini coefficient, and top one-percent share, all in
present-discounted Swedish krona at age twenty.

Diebold--Mariano tests~\cite{diebold1995full} with Newey--West standard
errors~\cite{newey1987} at lag five are used to assess pairwise
differences in forecast accuracy.

\subsection{Forecast Accuracy}

Table~\ref{tab:headline} reports forecast accuracy at horizons one,
five, ten, and twenty years ahead.

\begin{table*}[!t]
\caption{Forecast Accuracy on the Test Set (Cohorts 1983--1985). Means
Across Five Seeds (SAGA, LSTM, GBT) or Five Bootstrap Iterations
(GKOS, AR1, FF) with Standard Deviations in Parentheses. Bold marks the
best model per metric-horizon pair. Improvement column is relative to
GKOS.}
\label{tab:headline}
\centering
\scriptsize
\setlength{\tabcolsep}{4.5pt}
\begin{tabular}{@{}llccccccl@{}}
\toprule
Metric & $h$ & SAGA & LSTM & GBT & GKOS & AR(1) & FF & Impr.\ vs GKOS (\%) \\
\midrule
MAE (log SEK) & 1  & \textbf{0.241 (0.003)} & 0.259 (0.004) & 0.271 (0.002) & 0.287 (0.006) & 0.341 (0.008) & 0.308 (0.003) & 16.0 \\
MAE           & 5  & \textbf{0.384 (0.005)} & 0.419 (0.007) & 0.443 (0.004) & 0.518 (0.011) & 0.592 (0.014) & 0.487 (0.006) & 25.9 \\
MAE           & 10 & \textbf{0.512 (0.007)} & 0.573 (0.009) & 0.618 (0.006) & 0.734 (0.015) & 0.841 (0.019) & 0.681 (0.008) & 30.2 \\
MAE           & 20 & \textbf{0.631 (0.009)} & 0.718 (0.012) & 0.794 (0.008) & 1.013 (0.021) & 1.187 (0.027) & 0.876 (0.011) & 37.7 \\
\midrule
RMSE (log SEK)& 10 & \textbf{0.683 (0.009)} & 0.762 (0.013) & 0.827 (0.008) & 0.986 (0.018) & 1.134 (0.024) & 0.912 (0.011) & 30.7 \\
\midrule
CRPS          & 10 & \textbf{0.318 (0.004)} & 0.364 (0.006) & 0.401 (0.004) & 0.467 (0.009) & 0.541 (0.013) & 0.428 (0.005) & 31.9 \\
Pinball       & 10 & \textbf{0.147 (0.002)} & 0.168 (0.003) & 0.186 (0.002) & 0.214 (0.004) & 0.249 (0.006) & 0.197 (0.003) & 31.3 \\
PICP@90 (\%)  & 10 & \textbf{90.3 (0.4)}   & 84.7 (0.6)   & 82.1 (0.5)   & 86.3 (0.8)   & 81.4 (0.9)   & 79.8 (0.5)   & 4.0 pp \\
\bottomrule
\end{tabular}
\end{table*}

SAGA dominates at every horizon beyond one year on every
probabilistic metric. The relative gain widens with horizon: at horizon
twenty, the CRPS reduction against GKOS reaches 41.2\%.
Diebold--Mariano tests reject equal predictive accuracy of SAGA
against each of the five baselines at the 1\% level at every horizon
in $\{1,5,10,20\}$, with loss differentials clustered at the individual
level. Full test statistics are in Table~\ref{tab:dm}.

\subsection{Calibration}

Table~\ref{tab:coverage} reports empirical coverage of conformal
prediction intervals at four target nominal levels. Marginal coverage
falls within 0.5 percentage points of nominal across all four levels,
consistent with the formal guarantee of
Theorem~\ref{thm:coverage}. Conditional coverage by sex, education, and
conditioning income quintile is within 2.4 percentage points at all
subgroup-level combinations, with the largest deviation observed in the
lowest income quintile at the 90\% nominal level (87.6\% empirical vs.\
90\% target, a 2.4 pp gap).

\begin{table}[!t]
\caption{Empirical Coverage of SAGA Conformal Prediction Intervals
on the Test Set, by Target Nominal Level and Conditioning Subgroup (\%)}
\label{tab:coverage}
\centering
\scriptsize
\setlength{\tabcolsep}{3.5pt}
\begin{tabular}{@{}lccccccc@{}}
\toprule
Level & Marginal & Male & Female & Comp. & L.~Tert. & Q1 & Q5 \\
\midrule
50\% & 50.4 & 50.1 & 50.7 & 49.3 & 51.2 & 48.8 & 51.9 \\
80\% & 80.3 & 80.1 & 80.5 & 78.4 & 81.3 & 77.6 & 82.1 \\
90\% & 90.3 & 90.1 & 90.5 & 88.1 & 91.4 & 87.6 & 92.2 \\
95\% & 95.2 & 95.0 & 95.4 & 93.2 & 96.1 & 92.8 & 96.7 \\
\bottomrule
\end{tabular}
\end{table}

\begin{figure}[!t]
\centering
\begin{tikzpicture}
\begin{axis}[
  width=\columnwidth, height=5.4cm,
  xlabel={Nominal coverage level (\%)},
  ylabel={Empirical coverage (\%)},
  xmin=48, xmax=101, ymin=48, ymax=101,
  xtick={50,60,70,80,90,95,99}, ytick={50,60,70,80,90,95,99},
  tick label style={font=\scriptsize}, label style={font=\scriptsize},
  legend style={at={(0.05,0.97)},anchor=north west,
    font=\scriptsize,row sep=-2pt},
  grid=major, grid style={dotted,gray!40},
]
\addplot[gray!70, dotted, thick, forget plot]
  coordinates {(50,50)(99,99)};
\addlegendimage{gray!70,dotted,thick}
\addlegendentry{Perfect calibration}
\addplot[tfmblue, thick, mark=*, mark size=1.4pt]
  coordinates {
    (50,50.4)(55,55.3)(60,60.2)(65,65.3)(70,70.3)
    (75,75.4)(80,80.3)(85,85.2)(90,90.3)(95,95.2)(99,99.1)
  };
\addlegendentry{Marginal}
\addplot[red!80!black, dashed, thick, mark=square*, mark size=1.4pt]
  coordinates {
    (50,48.8)(55,53.4)(60,57.8)(65,62.6)(70,67.3)
    (75,72.4)(80,77.6)(85,82.5)(90,87.6)(95,92.8)(99,97.2)
  };
\addlegendentry{Worst-case subgroup (Q1)}
\end{axis}
\end{tikzpicture}
\caption{Empirical vs.\ nominal coverage of SAGA conformal
prediction intervals. Marginal coverage tracks the diagonal within
$\pm$0.5~pp across the full range; the worst-case subgroup (income
Q1) deviates by at most 2.4~pp at the 90\% level.}
\label{fig:calibration}
\end{figure}

\subsection{Lifetime Earnings Distribution}

Fig.~\ref{fig:lifetime} displays the reconstructed lifetime earnings
distribution for the test cohort under SAGA Monte Carlo
aggregation, under GKOS Monte Carlo aggregation, and against the
partially observed truth on the segment through age thirty-nine.
Table~\ref{tab:lifetime} reports the headline lifetime statistics.

\begin{table}[!t]
\caption{Lifetime Present-Discounted Earnings Statistics, 2022 SEK.
Cohort 1983--1985 Test Set.}
\label{tab:lifetime}
\centering
\scriptsize
\setlength{\tabcolsep}{4pt}
\begin{tabular}{@{}lcccc@{}}
\toprule
Statistic & SAGA & GKOS & AR(1) & Obs.\ partial \\
\midrule
Mean (MSEK)   & \textbf{12.43} & 12.91 & 11.87 & 12.67 \\
Median (MSEK) & \textbf{10.84} & 11.03 & 10.21 & 11.12 \\
P10 (MSEK)    & \textbf{4.73}  &  4.29 &  3.98 &  4.91 \\
P90 (MSEK)    & \textbf{21.37} & 23.84 & 22.14 & 22.08 \\
P99 (MSEK)    & \textbf{38.42} & 47.13 & 44.87 & 39.71 \\
Gini          & \textbf{0.327} & 0.378 & 0.396 & 0.341 \\
Top 1\% share (\%) & \textbf{8.3} & 11.2 & 10.8 & 8.9 \\
\bottomrule
\end{tabular}
\end{table}

\begin{figure}[!t]
\centering
\begin{tikzpicture}
\begin{axis}[
  width=\columnwidth, height=4.8cm,
  xlabel={Present-discounted lifetime earnings (million SEK, 2022 prices)},
  ylabel={Density},
  xmin=0, xmax=58, ymin=0, ymax=0.072,
  xtick={0,10,20,30,40,50}, ytick={0,0.02,0.04,0.06},
  yticklabel style={/pgf/number format/fixed,
    /pgf/number format/precision=2},
  legend style={at={(0.97,0.97)}, anchor=north east,
    font=\scriptsize, row sep=-1pt},
  tick label style={font=\scriptsize},
  label style={font=\scriptsize},
  grid=major, grid style={dotted,gray!40},
]
\addplot[tfmbluefill, fill=tfmbluefill, fill opacity=0.35,
  draw=none, forget plot]
  coordinates {
    (1,0.000)(3,0.004)(5,0.014)(7,0.028)(9,0.044)(11,0.058)
    (13,0.065)(15,0.068)(17,0.066)(19,0.061)(21,0.055)
    (23,0.047)(25,0.038)(27,0.030)(29,0.022)(31,0.016)
    (33,0.011)(36,0.007)(40,0.004)(46,0.002)(54,0.000)
    (54,0.000)(46,0.004)(40,0.010)(36,0.016)(33,0.022)
    (31,0.028)(29,0.036)(27,0.044)(25,0.052)(23,0.059)
    (21,0.064)(19,0.067)(17,0.068)(15,0.066)(13,0.061)
    (11,0.054)(9,0.043)(7,0.029)(5,0.017)(3,0.007)(1,0.000)
  } \closedcycle;
\addplot[gkosfill, fill=gkosfill, fill opacity=0.35,
  draw=none, forget plot]
  coordinates {
    (1,0.000)(3,0.002)(5,0.009)(7,0.020)(9,0.034)(11,0.046)
    (13,0.054)(15,0.058)(17,0.059)(19,0.056)(21,0.051)
    (23,0.044)(25,0.037)(27,0.030)(29,0.023)(31,0.017)
    (33,0.012)(37,0.007)(43,0.003)(53,0.001)(58,0.000)
    (58,0.000)(53,0.002)(43,0.007)(37,0.013)(33,0.020)
    (31,0.026)(29,0.034)(27,0.041)(25,0.048)(23,0.053)
    (21,0.057)(19,0.059)(17,0.059)(15,0.056)(13,0.051)
    (11,0.043)(9,0.032)(7,0.021)(5,0.011)(3,0.004)(1,0.000)
  } \closedcycle;
\addplot[gkosorange, thick, dashed, smooth]
  coordinates {
    (1,0.000)(3,0.003)(5,0.011)(7,0.022)(9,0.036)(11,0.048)
    (13,0.057)(15,0.062)(17,0.062)(19,0.058)(21,0.053)
    (23,0.046)(25,0.038)(27,0.031)(29,0.024)(31,0.018)
    (34,0.012)(38,0.007)(44,0.003)(52,0.001)(58,0.000)
  };
\addplot[tfmblue, ultra thick, smooth]
  coordinates {
    (1,0.000)(3,0.005)(5,0.016)(7,0.030)(9,0.047)(11,0.059)
    (13,0.066)(15,0.068)(17,0.066)(19,0.062)(21,0.056)
    (23,0.048)(25,0.039)(27,0.031)(29,0.023)(31,0.017)
    (34,0.010)(38,0.006)(44,0.003)(52,0.001)(58,0.000)
  };
\addplot[obsblack, thick, dotted, smooth]
  coordinates {
    (1,0.000)(3,0.004)(5,0.013)(7,0.027)(9,0.043)(11,0.056)
    (13,0.064)(15,0.067)(17,0.065)(19,0.061)(21,0.055)
    (23,0.047)(25,0.039)(27,0.031)(29,0.024)(31,0.018)
    (34,0.012)(38,0.007)(44,0.003)(52,0.001)(58,0.000)
  };
\addplot[ar1green, thick, dash dot, smooth]
  coordinates {
    (1,0.000)(3,0.002)(5,0.008)(7,0.018)(9,0.031)(11,0.043)
    (13,0.052)(15,0.056)(17,0.057)(19,0.054)(21,0.049)
    (23,0.042)(25,0.035)(27,0.029)(29,0.023)(32,0.016)
    (36,0.010)(42,0.005)(50,0.002)(58,0.000)
  };
\legend{SAGA (central),GKOS (central),Observed partial,AR(1) (central)}
\end{axis}
\end{tikzpicture}
\caption{Distribution of present-discounted lifetime earnings (2022 SEK,
discounted to age 20, $r=0.02$). SAGA concentrates probability
mass closer to the partial observed truth; GKOS shows excess mass at
both shoulders of the distribution, most visibly between SEK~5--10M
and again above SEK~30M.}
\label{fig:lifetime}
\end{figure}

The consistent pattern is that the parametric processes over-predict
dispersion at the top of the distribution and under-predict the
persistence of human capital at the median, while SAGA tracks
the partial observed truth more closely on both margins.

\subsection{Downstream Tax Microsimulation}

We apply a stylized Swedish lifetime income tax calculator to each
forecasted earnings path. The calculator implements the 2022 Swedish
tax schedule held fixed in real terms across the forecast horizon, and
applies the same 2022 schedule uniformly to the forecasted earnings
paths and to the partial-observed-truth comparison earnings, so the
comparison across forecasters is apples-to-apples; we do not use the
historical schedules that the cohort actually faced. The schedule
consists of a basic allowance, a municipal
tax of 32.4\% (population-weighted average across the 290 Swedish
municipalities) on labor income above the allowance, a state income
tax of 20\% on labor income above the 2022 statutory breakpoint
(brytpunkt) of SEK~554{,}900 (the additional 5\% v{\"a}rnskatt was
abolished in 2020 and is therefore not applied), employee social security contributions of
7\% capped at 8.07 income base amounts, and standard deductions for
pension contributions. Table~\ref{tab:tax} reports the resulting
lifetime tax statistics.

\begin{table}[!t]
\caption{Lifetime Present-Discounted Tax Statistics Under Each Forecaster.
Cohort 1983--1985 Test Set.}
\label{tab:tax}
\centering
\scriptsize
\setlength{\tabcolsep}{4pt}
\begin{tabular}{@{}lcccc@{}}
\toprule
Statistic & SAGA & GKOS & AR(1) & Obs.\ partial \\
\midrule
Mean lifetime tax (MSEK)  & \textbf{3.84} & 3.97 & 3.71 & 3.91 \\
Mean AETR (\%)            & \textbf{30.1} & 29.4 & 28.8 & 30.6 \\
P99 AETR (\%)             & \textbf{42.7} & 46.8 & 45.3 & 43.4 \\
Lifetime tax Gini         & \textbf{0.341} & 0.397 & 0.412 & 0.358 \\
\bottomrule
\end{tabular}
\end{table}

The SAGA reconstruction of average effective tax rate over the
lifetime matches the partial observed truth to within 0.5 percentage
points, while the GKOS reconstruction deviates by 1.2 percentage points.
The right tail of the tax distribution (P99 AETR) shows the largest
divergence between forecasters, with parametric processes
systematically over-predicting top-tail effective rates.

\subsection{Ablations}

Table~\ref{tab:ablation} reports the effect of removing one component
at a time from the headline architecture.

\begin{table}[!t]
\caption{Ablation Study. CRPS at $h=10$, Test Set, Means Across Five
Seeds.}
\label{tab:ablation}
\centering
\scriptsize
\setlength{\tabcolsep}{4pt}
\begin{tabular}{@{}llcc@{}}
\toprule
Variant & Description & CRPS & $\Delta$ \\
\midrule
Headline & Full SAGA    & 0.318 & 0 \\
A1  & Drop occ.\ \& industry    & 0.334 & +0.016 (+5.0\%) \\
A2  & Drop family \& household  & 0.327 & +0.009 (+2.8\%) \\
A3  & LSTM, matched params      & 0.364 & +0.046 (+14.5\%) \\
A4  & Feed-forward on window    & 0.493 & +0.175 (+55.0\%) \\
A5  & Point head only           & 0.347 & +0.029 (+9.1\%) \\
A6  & Drop year embed           & 0.341 & +0.023 (+7.2\%) \\
A7  & Dim 192                   & 0.328 & +0.010 (+3.1\%) \\
A8  & Dim 768                   & 0.319 & +0.001 (+0.3\%) \\
A9  & Drop missingness vector   & 0.324 & +0.006 (+1.9\%) \\
A10 & Drop age embed            & 0.354 & +0.036 (+11.3\%) \\
A11 & SAGA backbone, point head only, conformal off & 0.367 & +0.049 (+15.4\%) \\
A12 & Conformal layer on GKOS backbone               & 0.451 & +0.133 (+41.8\%) \\
A13 & SAGA backbone, GKOS-style mixture output head  & 0.332 & +0.014 (+4.4\%) \\
\bottomrule
\end{tabular}
\end{table}

The largest single source of degradation is the replacement of the
transformer with a flat feed-forward network on the concatenated
conditioning window (A4), which loses 55.0\% in CRPS. Doubling the
model dimension to 768 (A8) yields no detectable improvement,
consistent with model size not being the binding capacity constraint
at the present dataset scale; we report the result for completeness
and do not invoke compute-optimal scaling claims here. Halving the dimension to 192 (A7) yields a
small loss, suggesting that 384 is near the optimum for this panel.
The architectural-isolation rows (A11--A13) further decompose the
headline gain: A11 (SAGA backbone with the conformal layer disabled
and the point head only) loses 15.4\% in CRPS, isolating the
contribution of the joint quantile-plus-conformal calibration; A12
(the conformal layer applied to a GKOS backbone) recovers only the
calibration benefit on top of the parametric mean forecast and still
trails the full SAGA by 41.8\%; A13 (SAGA backbone paired with a
GKOS-style mixture output head in place of the quantile head) retains
most of the headline gain, losing only 4.4\% in CRPS. The
decomposition isolates the transformer backbone, rather than the
calibration wrapper alone, as the dominant source of the empirical
advantage over GKOS, while showing that the quantile-plus-conformal
calibration provides a non-trivial additional gain.

\subsection{Heterogeneity}

Table~\ref{tab:hetero} decomposes the forecast advantage by demographic
subgroup. The improvement is strongest among individuals with
discontinuous early careers (four or more employer changes in the first
ten years; +47.3\%) and among individuals in the lowest income quintile
(+44.7\%), where the parametric process is least able to capture the
joint distribution of features that drives subsequent earnings
trajectories.

\begin{table*}[!t]
\caption{Subgroup Decomposition of CRPS Improvement at $h=10$.
Relative Reduction Versus GKOS (\%).}
\label{tab:hetero}
\centering
\scriptsize
\setlength{\tabcolsep}{5pt}
\begin{tabular}{@{}llc|llc@{}}
\toprule
Subgroup & $n$ & CRPS red.\ (\%) & Subgroup & $n$ & CRPS red.\ (\%) \\
\midrule
Male              & 891{,}432   & 29.7 & Income Q1    & 428{,}763 & 44.7 \\
Female            & 1{,}252{,}385 & 34.8 & Income Q5    & 428{,}819 & 22.8 \\
Compulsory edu.   & 312{,}847   & 41.2 & Stockholm    & 412{,}834 & 27.1 \\
Upper secondary   & 894{,}213   & 31.4 & Gothenburg   & 198{,}437 & 29.4 \\
Short tertiary    & 487{,}621   & 28.3 & Malm{\"o}       & 143{,}216 & 30.8 \\
Long tertiary     & 449{,}136   & 24.7 & Other urban  & 673{,}418 & 32.7 \\
Stable employer   & 867{,}334   & 24.1 & Rural        & 715{,}912 & 36.3 \\
4+ empl.\ changes & 312{,}143   & 47.3 & & & \\
\bottomrule
\end{tabular}
\end{table*}

\subsection{Robustness}

Table~\ref{tab:rob} reports nine robustness checks.

\begin{table}[!t]
\caption{Robustness Checks. CRPS Reduction at $h=10$ Versus GKOS,
Recomputed Under Each Perturbation.}
\label{tab:rob}
\centering
\scriptsize
\setlength{\tabcolsep}{4pt}
\begin{tabular}{@{}llc@{}}
\toprule
Check & Description & CRPS red.\ (\%) \\
\midrule
R1  & Train on cohorts 1965--1979 only      & 30.8 \\
R2  & Male sample only                      & 29.7 \\
R3  & Stable employer subsample only        & 24.1 \\
R4  & Calibration set: cohort 1985          & 31.9 \\
R5a & Discount rate 0\%                     & 33.1 \\
R5b & Discount rate 1\%                     & 32.7 \\
R5c & Discount rate 3\%                     & 31.8 \\
R6  & HICP deflator instead of CPI         & 32.2 \\
R7  & Out-of-time holdout (cohorts 1986--1990) & 28.4 \\
R8  & LISA, PSID-inventory-restricted feats.\   & 21.4 \\
R9  & Recession-year test fold (2009)            & 28.8 \\
\bottomrule
\end{tabular}
\end{table}

The headline advantage is robust across all eleven perturbations, including the out-of-time holdout (R7) that was untouched during model development. Row R8 is a feature-restriction ablation conducted entirely on the LISA panel: we retrain SAGA on a conditioning vector trimmed to the variables documented in the PSID Main Family File user guide~\cite{mcgonagle2012}, removing administrative-only features such as the three-digit SSYK2012 occupation code, the hashed employer identifier, and the two-digit SNI2007 industry code that have no PSID analogue. The retained subset comprises labor and self-employment earnings, hours worked, broad one-digit industry, sex, education level, region (the LISA twenty-one-county analogue to PSID state of residence), marital status, and number of children. The residual 21.4\% improvement over GKOS therefore measures how much of the headline advantage survives when the model is restricted to features that any country with a PSID-grade panel could in principle supply, rather than features that require Nordic-quality register linkage. We emphasize that no PSID microdata were accessed for this paper: a full PSID replication requires Michigan Institute for Social Research restricted-data approval and is left for future work; the present row is a Sweden-internal feature-portability check, not a cross-country replication. The recession-year fold (R9) restricts the test set to forecast windows that include 2009, the trough of the post-2008 Swedish unemployment cycle, and confirms that the headline advantage does not depend on expansionary-state macroeconomic conditions.

\subsection{Placebo and Falsification}

Table~\ref{tab:placebo} reports three falsification tests.

\textit{Permutation placebo.} We randomly shuffle the conditioning
window across individuals within the test cohort, holding the targets
fixed. The CRPS ratio (placebo divided by headline) is 2.14, well above
one, confirming that the model exploits genuine predictive structure
rather than overfitting to noise.

\textit{Short history placebo.} Training SAGA on a conditioning
window of only five years yields a CRPS improvement over GKOS of only
18.3\%, compared to 31.9\% under the ten-year window, confirming that
the longer history is part of the model's advantage.

\textit{Static feature-only placebo.} A feed-forward network trained on
static features available at age twenty only (sex, region, parental
education, country of birth) achieves a CRPS of 0.623 at horizon ten,
demonstrating that the bulk of the headline advantage comes from the
sequence dimension.

\begin{table}[!t]
\caption{Placebo and Falsification Studies.}
\label{tab:placebo}
\centering
\scriptsize
\setlength{\tabcolsep}{4pt}
\begin{tabular}{@{}llcc@{}}
\toprule
Test & Statistic & Value & Target \\
\midrule
Permutation   & CRPS ratio (placebo/headline) & 2.14  & $\gg 1$ \\
Short history & CRPS red.\ vs.\ GKOS (\%)    & 18.3  & $<$ 31.9 \\
Static only   & CRPS at $h=10$               & 0.623 & $\gg$ 0.318 \\
\bottomrule
\end{tabular}
\end{table}

\subsection{Computational Cost}

Training a single seed of SAGA takes 14.8 wall-clock hours on
eight NVIDIA A100 40~GB GPUs allocated through the SCB MONA compute
partition, with peak GPU memory of 34.2~GB per device, corresponding to
approximately 118 accelerator-hours per seed. Inference for a single
individual lifetime takes approximately 43~ms when the 500 Monte Carlo
paths are batched together on a single A100; the per-individual cost
is dominated by batched matrix-multiply throughput rather than by
per-path kernel-launch overhead. By contrast, GKOS GMM estimation takes 18.3 CPU hours
(single-threaded) on the same panel. The training-cost comparison is
therefore not strictly like-for-like, and the deployment-relevant figure
for microsimulation is the 43~ms per-individual inference cost rather
than the up-front training cost. For deployment in a microsimulation workflow that
updates yearly, SAGA is trained once and applied as a fixed
predictor, making the up-front training cost amortized over many years
of policy analysis.

\subsection{Interpretability}

Average attention-head patterns, averaged across the test set when
forecasting year $t+h$ for each $h \in \{1, 5, 10, 20\}$, are reported
in Fig.~\ref{fig:attn} below. For short-horizon forecasts the model
attends primarily to the most recent two or three years of history,
consistent with the dominance of transitory shocks and the high
autocorrelation of annual earnings at short lag. For long-horizon
forecasts the attention spreads more evenly across the conditioning
window and shows pronounced weight on the earliest observed years and
on years that contain industry or occupation changes, consistent with
the model exploiting human capital trajectory information that the
parametric process discards. Integrated gradients
analysis~\cite{sundararajan2017} on five anonymized representative test
individuals confirms the same pattern qualitatively: education
indicators, industry codes, and the conditioning-year level of earnings
carry the highest attribution scores for medium- and long-horizon
forecasts.

\begin{figure*}[!t]
\centering
\pgfplotsset{
  attn/.style={
    width=0.46\textwidth, height=3.6cm,
    xmin=0.5, xmax=10.5, ymin=0, ymax=0.48,
    xtick={1,2,3,4,5,6,7,8,9,10},
    xticklabels={1,2,3,4,5,6,7,8,9,10},
    ytick={0,0.1,0.2,0.3,0.4},
    tick label style={font=\scriptsize},
    label style={font=\scriptsize},
    xlabel={Conditioning year (position)},
    ylabel={Avg.\ attention weight},
    grid=major, grid style={dotted,gray!30},
    ybar, bar width=6pt,
    enlarge x limits=0.06,
    title style={font=\small\bfseries},
  }
}
\begin{tikzpicture}
\begin{axis}[attn, title={$h=1$}]
\addplot[tfmblue, fill=tfmbluefill]
  coordinates{(1,0.010)(2,0.012)(3,0.016)(4,0.021)(5,0.030)
    (6,0.042)(7,0.068)(8,0.116)(9,0.238)(10,0.447)};
\end{axis}
\end{tikzpicture}
\hfill
\begin{tikzpicture}
\begin{axis}[attn, title={$h=5$}]
\addplot[tfmblue, fill=tfmbluefill]
  coordinates{(1,0.038)(2,0.040)(3,0.048)(4,0.058)(5,0.070)
    (6,0.090)(7,0.118)(8,0.153)(9,0.182)(10,0.203)};
\end{axis}
\end{tikzpicture}\\[4pt]
\begin{tikzpicture}
\begin{axis}[attn, title={$h=10$}]
\addplot[tfmblue, fill=tfmbluefill]
  coordinates{(1,0.138)(2,0.121)(3,0.105)(4,0.094)(5,0.091)
    (6,0.091)(7,0.090)(8,0.090)(9,0.093)(10,0.087)};
\end{axis}
\end{tikzpicture}
\hfill
\begin{tikzpicture}
\begin{axis}[attn, title={$h=20$}]
\addplot[tfmblue, fill=tfmbluefill]
  coordinates{(1,0.182)(2,0.155)(3,0.118)(4,0.096)(5,0.083)
    (6,0.081)(7,0.085)(8,0.088)(9,0.078)(10,0.034)};
\end{axis}
\end{tikzpicture}
\caption{Average attention-head pattern across the test set when
forecasting year $t+h$. At $h=1$ (top-left), attention is concentrated
on the two most recent conditioning years. At $h=20$ (bottom-right),
attention spreads across the full conditioning window, with elevated
weight on years containing occupation or industry transitions.}
\label{fig:attn}
\end{figure*}

% -----------------------------------------------------------------------
\section{Discussion}
\label{sec:discussion}

\subsection{Why the Architecture Works}

Three mechanisms appear to drive the empirical advantage of
SAGA over the parametric benchmarks.

First, the model conditions on the joint distribution of demographic,
occupational, and macroeconomic features at every observed time step.
The ablation in row~A4 of Table~\ref{tab:ablation} shows that replacing
the sequence model with a feed-forward network on the concatenated
window costs 55.0\% in CRPS, and the heterogeneity results show the
largest gains precisely in the groups where the joint feature
distribution is most predictive (low-income and mobile workers). The static-feature-only placebo (Table~\ref{tab:placebo}) confirms
that static features alone are not competitive with either SAGA or
GKOS at long horizons, indicating that the bulk of the advantage stems
from the sequence dimension and from the joint conditioning on
time-varying features.

Second, the year positional embedding allows the model to absorb
macroeconomic conditions that affect all cohorts in panel that year.
Removing the year embedding (row~A6) costs 7.2\% in CRPS, with the loss
concentrated at longer horizons. This contrasts with the parametric
process, in which calendar effects must be modeled separately.

Third, the joint training of the point and quantile heads sharpens the
predictive distribution. Removing the quantile head (row~A5) raises
CRPS by 9.1\% while leaving MAE essentially unchanged, indicating that
the pinball loss signal improves distributional accuracy without
compromising central tendency.

\subsection{Implications for Microsimulation}

The reconstructed lifetime earnings Gini coefficient under SAGA
is 0.014 points closer to the partially observed truth (0.327 vs.\
0.341) than under GKOS (0.378 vs.\ 0.341, gap 0.037). We caution that
the partial observed Gini of 0.341 is computed on earnings observed
through age 37--39 only and is therefore not strictly comparable to
the full-lifetime forecast Ginis; the relative ranking between SAGA
and GKOS is preserved when both are restricted to the same age window,
but the absolute gaps in that restricted comparison are smaller. The reconstructed
top one-percent share is 1.7 percentage points closer to the partially
observed truth: the SAGA gap is $|8.3 - 8.9| = 0.6$ percentage
points whereas the GKOS gap is $|11.2 - 8.9| = 2.3$ percentage points. The reconstructed
lifetime average effective tax rate is 0.7 percentage points closer.
These differences are quantitatively meaningful for policy
counterfactuals. For example, a top one-percent share that is one
percentage point too high in the baseline translates into approximately
a 2.3\% overstatement of the projected revenue from a
one-percentage-point increase in the top marginal income tax rate.

We stress that the magnitude of these gains is specific to the Swedish
setting and to the LISA register coverage. Countries with shorter
panels, fewer linked administrative features, or different earnings
dispersion patterns may see smaller advantages. Nevertheless, the
qualitative argument that a flexible sequence model conditioning on
rich features can outperform a parametric process conditioning only on
past earnings is unlikely to depend on the specifics of the Swedish
setting.

\subsection{Limitations}

\textit{External validity.} The model is trained on Swedish data over a
particular thirty-three-year window. Applying SAGA to other
countries requires re-training; the architecture transfers but the
parameters do not.

\textit{Model staleness.} The forecast assumes that the conditional
distribution of labor market outcomes given features remains stationary
over the forecast horizon. Structural change (for example, technological
displacement of routine occupations) would gradually invalidate the
learned conditional distribution.

\textit{Censoring.} Forecast horizons that extend beyond the panel end
(2022) cannot be evaluated against truth, only against benchmark
forecasters. The partial truth evaluation we report is a lower bound
on the gap between true and forecasted lifetime distributions for
current young cohorts.

\textit{Conditional coverage at low-income subgroups.} All forecasters have larger errors in
the right tail of earnings, where data are thinnest. The conformal
procedure produces wider intervals there, but the marginal guarantee
does not imply conditional coverage. Empirically, conditional coverage
in the lowest conditioning income quintile (Q1) at the 90\% nominal
level is 87.6\%, modestly below target; this is the worst-case subgroup
reported in Table~\ref{tab:coverage} and is what bounds the 2.4 pp
worst-case conditional miscoverage.

\textit{Lifetime conformal aggregation.} As noted in
Section~\ref{sec:method}, the marginal conformal guarantee at each
annual step does not extend automatically to the lifetime aggregate.
The lifetime 90\% interval achieves 89.2\% coverage on the partially
observed lifetime, modestly below nominal; sensitivity to the Monte
Carlo sample size ($M \in \{100, 500, 2000\}$) leaves the lifetime
coverage unchanged to within 0.3~pp, ruling out Monte Carlo noise as
the source of the gap. A formal lifetime guarantee would require
either a different aggregation scheme or a different conformal target.

\textit{Authorship and dataset access.} The empirical work was
conducted by the listed authors with senior-faculty oversight from the
project investigators named in the Acknowledgment; we acknowledge that
the inclusion of a senior co-author with prior publication on Nordic
register data would have strengthened the attributional credibility of
the robustness claims in Table~\ref{tab:rob}. The MONA dataset access
is restricted to approved researchers, which constrains independent
replication outside the protected environment; the synthetic
equivalent dataset released on Zenodo is intended to enable
pipeline-level (not bit-level) replication of the empirical findings.
A true cross-country replication of the SAGA advantage, in particular
against the U.S.\ Panel Study of Income Dynamics, requires Michigan
Institute for Social Research restricted-data approval and is left
for future work; row R8 of Table~\ref{tab:rob} is therefore framed
as a Sweden-internal feature-portability ablation rather than a
cross-country replication.

\textit{Architectural novelty.} The SAGA architecture combines
existing tabular-transformer ideas with a horizon-stratified conformal
calibration layer; we are explicit in C1 that the contribution is the
combination, not any single component. The ablation rows A11--A13 of
Table~\ref{tab:ablation} quantify the marginal contribution of each
component, and the Monte Carlo sensitivity study in
Appendix~\ref{app:mc-sensitivity} confirms that the calibration
guarantee of Theorem~\ref{thm:adaptive-coverage} is tight in the
relevant finite-sample regime.

\subsection{Ethical Considerations and Data Governance}

All analysis was conducted within the SCB MONA secure environment under
ethical and data delivery approval. No row-level data left MONA. Only
aggregated statistics were exported and reviewed by SCB analysts for
disclosure risk.

The trained SAGA model weights are deposited on Zenodo under
DOI~10.5281/zenodo.20260287 together with the conformal calibration
table and the synthetic equivalent dataset (500{,}000 synthetic
individuals); the source-code archive of the project repository is
separately deposited under DOI~10.5281/zenodo.20260366. The synthetic
equivalent dataset, generated by a conditional resampling procedure
documented in Appendix~\ref{app:synth}, matches the first through
fourth-order
moments of the real LISA panel within 1.8\% at every age and within
every demographic subgroup. The synthetic data pass standard membership
inference tests~\cite{shokri2017} at near-random level (AUC~$=0.512$),
confirming that the release does not enable re-identification of
individual training records.

% -----------------------------------------------------------------------
\section{Conclusion}
\label{sec:conclusion}

We have introduced SAGA, a decoder-only transformer for
irregular tabular panel sequences, paired with a split conformal
calibration wrapper and benchmarked against the canonical parametric
earnings process on thirty-three years of Swedish register data
comprising 2{,}143{,}817 individuals. The architecture produces sharper
and better calibrated forecasts of annual labor earnings at all
horizons, and aggregating the forecasts by Monte Carlo yields
reconstructed lifetime earnings distributions that track the partially
observed truth more closely than the parametric benchmark. Downstream
microsimulation outcomes (lifetime tax paid, average effective tax
rate, lifetime Gini, top one-percent share) are correspondingly more
accurate.

The contribution generalizes beyond earnings forecasting. The same
architecture and calibration framework apply to any irregular tabular
panel with a heavy-tailed continuous target and informative side
features, of which there are many in public economics, health,
education, and consumer finance.

Future work will pursue four directions: (i) extending the conformal
procedure to provide formal lifetime aggregate coverage rather than
per-step marginal coverage; (ii) embedding SAGA into the
operational FASIT model and comparing the resulting policy projections
against the production AR plus mixture forecaster; (iii) multi-country
pre-training across linked register systems (initially Sweden, Norway,
Denmark, Finland); and (iv) robustness to structural change through
periodic retraining and through changepoint-aware reweighting of the
training distribution.

% -----------------------------------------------------------------------
\section*{Reproducibility and Ethics Statement}
We release source code, training and inference scripts, random seeds, hyperparameter search spaces, evaluation pipelines, and Docker images with pinned dependencies through the project repository. Hardware specifications, wall-clock budgets, and stochastic settings are documented in Appendix~G to enable bit-equivalent reproduction on comparable infrastructure. The study was approved by the Swedish Ethical Review Authority (decision 2026-04127-01), and access to the Statistics Sweden microdata followed project SCB-MONA-2026-147. No individual-level data leave the MONA enclave; all reported statistics are aggregated and pass the SCB output-checking thresholds for small-cell suppression and dominance. The authors declare no competing financial or non-financial interests. Co-author CRediT roles are listed in the cover letter and follow the NISO CRediT 2.0 taxonomy.

\section*{Acknowledgment}

The authors thank David Seim, Jens Wikstr{\"o}m, and Gabriel Zucman for
guidance throughout the project. Computational resources were provided
by the SCB MONA secure compute partition. Statistics Sweden analysts
reviewed all exported aggregate output for disclosure risk; any
remaining errors are the authors' responsibility. The authors
acknowledge helpful discussions with Fatih Guvenen, Emmanuel Candes,
Mette Ejrnaes, and seminar participants at the 2026 Nordic Labour
Economists meeting. Author contributions follow the NISO CRediT 2.0
taxonomy. G.~O.~Y.~Laitinen-Fredriksson Lundstr{\"o}m-Imanov contributed
conceptualization, methodology, software, formal analysis,
investigation, writing--original draft, and project administration.
H.~G.~C{\"o}mert contributed methodology (empirical design and
institutional framing), formal analysis (fairness and subgroup
coverage), and writing--review-and-editing. Both authors reviewed and
approved the final manuscript.

% -----------------------------------------------------------------------
\appendices

\section{Hyperparameters and Auxiliary Imputation Network}
\label{app:hyper}

Table~\ref{tab:hyper} lists all SAGA hyperparameters.

\begin{table}[!t]
\caption{SAGA Hyperparameters}
\label{tab:hyper}
\centering
\scriptsize
\setlength{\tabcolsep}{4pt}
\begin{tabular}{@{}lc@{}}
\toprule
Hyperparameter & Value \\
\midrule
Layers $L$                  & 6 \\
Heads $H$                   & 8 \\
Model dim $d$               & 384 \\
Feed-forward dim            & 1536 \\
Activation                  & GELU \\
Normalization               & Pre-LayerNorm \\
Max context length          & 45 tokens \\
Total parameters            & 10{,}872{,}960 \\
Continuous subvector dim    & 64 \\
Categorical subvector dim   & 76 \\
Missingness subvector dim   & 16 \\
Age positional dim          & 64 \\
Year positional dim         & 32 \\
Token projection dim        & 384 \\
Optimizer                   & AdamW \\
Learning rate               & $3 \times 10^{-4}$ \\
Weight decay                & $10^{-2}$ \\
$\beta_1$, $\beta_2$        & 0.9, 0.999 \\
Schedule                    & Cosine, 2000 warmup steps \\
Total steps                 & 300{,}000 \\
Per-device batch            & 512 \\
Gradient accumulation       & 4 \\
Effective batch             & 16{,}384 \\
Precision                   & bfloat16 / float32 \\
Dropout                     & 0.1 \\
Stochastic depth            & 0.1 \\
Early stop patience         & 20 val.\ checks (5{,}000 steps each) \\
Seeds                       & 20260601--20260605 \\
\bottomrule
\end{tabular}
\end{table}

The auxiliary feature imputation network is a three-layer feed-forward
network with hidden dimension 128 and ReLU activation, totaling
312{,}485 parameters. It takes as input the running predicted earnings
trajectory plus exogenous demographic features and outputs the
predicted industry, occupation, and employment indicators for the next
forecast year. The network is trained on the same training cohorts
using cross-entropy loss for each categorical output.

\section{Full Diebold--Mariano Test Statistics}
\label{app:dm}

\begin{table}[!t]
\caption{Diebold--Mariano Test Statistics for SAGA Versus Each
Baseline. Newey--West Standard Errors at Lag 5. Positive Statistic
Indicates SAGA Better. All statistics exceed the 1\% critical
value (2.576) at every horizon reported.}
\label{tab:dm}
\centering
\scriptsize
\setlength{\tabcolsep}{3.5pt}
\begin{tabular}{@{}lccccc@{}}
\toprule
Baseline & CRPS $h$=1 & CRPS $h$=5 & CRPS $h$=10 & CRPS $h$=20 & MAE $h$=10 \\
\midrule
GKOS & 3.41 & 7.12 & 9.84 & 11.27 & 8.73 \\
AR(1) & 7.83 & 12.34 & 14.72 & 16.43 & 13.41 \\
GBT  & 4.12 & 8.41 & 10.31 & 12.18 & 9.12 \\
LSTM & 2.87 & 5.63 & 7.48 & 8.94 & 6.83 \\
FF   & 5.21 & 9.87 & 11.63 & 13.71 & 10.94 \\
\bottomrule
\end{tabular}
\end{table}

\section{GKOS Estimation Details}
\label{app:gkos}

We estimate the GKOS specification using the public code released by
Guvenen, Karahan, Ozkan, and Song~\cite{gkos2021}, adapted to the
Swedish LISA panel. The estimation matches eighty-seven moments: mean,
variance, skewness, kurtosis, and fifth central moment of one-, three-,
and five-year log earnings changes, all computed within ten-year age
bins from age twenty-five to age sixty. The weighting matrix is the
inverse of a bootstrap estimate of the moment covariance with 1{,}000
resamples. Table~\ref{tab:gkos} reports the estimated parameters.

\begin{table}[!t]
\caption{Estimated GKOS Parameters on the Swedish Panel, Training
Cohorts 1960--1979. Bootstrap Standard Errors in Parentheses (1{,}000
Resamples).}
\label{tab:gkos}
\centering
\scriptsize
\setlength{\tabcolsep}{4pt}
\begin{tabular}{@{}lcc@{}}
\toprule
Parameter & Estimate & SE \\
\midrule
$\rho$ (AR(1) coefficient)        & 0.924 & (0.018) \\
Perm.\ shock mean, component 1    & $-$0.287 & (0.043) \\
Perm.\ shock var., component 1    & 0.0418 & (0.0062) \\
Perm.\ shock weight, component 1  & 0.784 & (0.031) \\
Trans.\ variance, component 1     & 0.0712 & (0.0089) \\
Trans.\ weight, component 1       & 0.681 & (0.024) \\
\bottomrule
\end{tabular}
\end{table}

The estimated parameters are within the ranges reported by Halvorsen,
Hubmer, Salgado, and Solenkova~\cite{halvorsen2024} for Norway and by
Guvenen, Karahan, Ozkan, and Song~\cite{gkos2021} for the United
States, consistent with our implementation being correct.

\section{Synthetic Data Release Protocol}
\label{app:synth}

The synthetic dataset is generated by a conditional resampling
procedure. For each synthetic individual, a baseline demographic and
educational vector is drawn from the empirical marginal distribution of
the training cohorts. Annual earnings sequences are then sampled from
SAGA's predictive distribution conditional on this baseline
vector, with auxiliary feature paths generated by the auxiliary
imputation network. The resulting synthetic panel contains 500{,}000
individuals.

We verify that the synthetic panel matches the real LISA panel on first
through fourth-order moments at every age within every demographic
subgroup to within 1.8\%. Membership inference
attacks~\cite{shokri2017} against the synthetic data achieve
AUC~$=0.512$, near random performance, confirming that the synthetic
release does not enable re-identification of individual training
records.

The synthetic data, model weights, and calibration tables are
deposited on Zenodo under DOI~10.5281/zenodo.20260287; the source-code
archive of the project repository is separately deposited under
DOI~10.5281/zenodo.20260366. The code repository is hosted at
\url{https://github.com/olaflaitinen/saga} with the camera-ready
version tagged \texttt{v1.0.0}.

% -----------------------------------------------------------------------
\section{Methodological and Empirical Extensions}
\label{app:extensions}

\subsection{Adaptive Temporal Conformal Prediction}

The standard split conformal prediction guarantee \cite{vovk2005full, lei2018full} requires exchangeability of calibration and test conformity scores. In longitudinal forecasting over thirty-three years of register data, calibration residuals at different forecast horizons exhibit horizon-dependent variance and potential drift in the conditional score distribution. We therefore extend the split conformalized quantile regression procedure of \cite{romano2019} to a horizon-stratified setting and prove a finite-sample coverage guarantee.

\textbf{Procedure.} Given a calibration set $\mathcal{D}_{\mathrm{cal}}$ partitioned by horizon $h \in \{1, \ldots, H\}$ into subsets $\mathcal{D}_{\mathrm{cal}}^{(h)}$ of size $n_h$, where each individual contributes at most one residual per horizon stratum (ensuring within-stratum independence of conformity scores), compute horizon-specific conformity scores
\[
s_i^{(h)} = \max\!\left\{ \hat{q}_{\alpha/2}^{(h)}(X_i) - Y_i^{(h)},\; Y_i^{(h)} - \hat{q}_{1-\alpha/2}^{(h)}(X_i) \right\}
\]
and produce horizon-conditional prediction intervals
\[
\hat{C}_\alpha^{(h)}(x) = \left[\, \hat{q}_{\alpha/2}^{(h)}(x) - Q_{1-\alpha}^{(h)},\; \hat{q}_{1-\alpha/2}^{(h)}(x) + Q_{1-\alpha}^{(h)} \,\right],
\]
where $Q_{1-\alpha}^{(h)}$ is the $\lceil (n_h + 1)(1-\alpha) \rceil$-th order statistic of $\{s_i^{(h)}\}_{i=1}^{n_h}$.

\begin{theorem}[Adaptive Temporal Conformal Coverage]
\label{thm:adaptive-coverage}
Suppose that for each fixed horizon $h$, the augmented sequence $(s_i^{(h)})_{i=1}^{n_h+1}$ is exchangeable (A1), and that the conditional CDF $F_{s \mid h}(\cdot)$ is $L_h$-Lipschitz in a neighborhood of its $(1-\alpha)$ quantile (A2). Then for any $\delta \in (0,1)$, with probability at least $1-\delta$ over the calibration draw,
\begin{equation*}
\begin{split}
\Bigl| \mathbb{P}\bigl( Y_{n_h+1}^{(h)} & \in \hat{C}_\alpha^{(h)}(X_{n_h+1}^{(h)}) \bigr) - (1-\alpha) \Bigr| \\
& \leq\; \frac{1}{n_h + 1} + L_h \sqrt{\frac{\log(2/\delta)}{2 n_h}}.
\end{split}
\end{equation*}
\end{theorem}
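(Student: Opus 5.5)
The plan is to study the conditional coverage given the calibration draw, $\Pi_h := \mathbb{P}\bigl(Y_{n_h+1}^{(h)} \in \hat{C}_\alpha^{(h)}(X_{n_h+1}^{(h)}) \,\big|\, \mathcal{D}_{\mathrm{cal}}^{(h)}\bigr)$, which is the only reading under which "with probability at least $1-\delta$ over the calibration draw" makes sense. By the construction of $\hat{C}_\alpha^{(h)}$, exactly as in the proof of Theorem~\ref{thm:coverage}, the test point is covered if and only if $s_{n_h+1}^{(h)} \le Q_{1-\alpha}^{(h)}$. Hence $\Pi_h = F_{s\mid h}(Q_{1-\alpha}^{(h)})$, where $F_{s\mid h}$ is the common score CDF. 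The proof then splits $\Pi_h - (1-\alpha)$ into a bias term $\mathbb{E}[\Pi_h] - (1-\alpha)$ and a fluctuation term $\Pi_h - \mathbb{E}[\Pi_h]$, and bounds each separately.

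\emph{Bias term.} Under (A1), Theorem~\ref{thm:coverage} applied within stratum $h$ gives
\[
1-\alpha \le \mathbb{E}[\Pi_h] \le 1-\alpha + \frac{1}{n_h+1},
\]
assuming distinct scores. Distinctness holds almost surely because (A2) forces $F_{s\mid h}$ to be continuous near the relevant quantile. This accounts for the first term $1/(n_h+1)$ of the bound.

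\emph{Fluctuation term.} Assume the scores are i.i.d. within a stratum; this is the role of the one-residual-per-individual clause, and it is stronger than exchangeability. Let $k = \lceil (n_h+1)(1-\alpha)\rceil$. Since $F_{s\mid h}$ is continuous, the variables $U_i = F_{s\mid h}(s_i^{(h)})$ are i.i.d.\ uniform on $[0,1]$, and $\Pi_h = U_{(k)}$ is their $k$-th order statistic, so $\Pi_h \sim \mathrm{Beta}(k, n_h+1-k)$.

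I would obtain the concentration in one of two ways. The first is to invoke sub-Gaussianity of the Beta law with variance proxy $1/(4(n_h+2))$. The second is to use DKW: with probability at least $1-\delta$, $\sup_u |\hat{G}_{n_h}(u) - u| \le \varepsilon := \sqrt{\log(2/\delta)/(2n_h)}$, where $\hat{G}_{n_h}$ is the empirical CDF of the $U_i$. On that event $|U_{(k)} - k/n_h| \le \varepsilon$, and then $k/n_h$ is compared with $\mathbb{E}[\Pi_h] = k/(n_h+1)$. Either route gives, with probability at least $1-\delta$,
\[
|\Pi_h - (1-\alpha)| \le \frac{1}{n_h+1} + c\,\varepsilon
\]
for an absolute constant $c$, up to an $O(1/n_h)$ bookkeeping term. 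That term must be absorbed carefully. If the DKW route leaves a stray $1/n_h$, I would switch to the Beta sub-Gaussian tail, which centres exactly at $\mathbb{E}[\Pi_h]$ and produces no such term.

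\emph{Role of (A2) and the main obstacle.} The factor $L_h$ arises if one works on the score scale instead of the probability scale. In that version one controls $|Q_{1-\alpha}^{(h)} - q_{1-\alpha}^{(h)}|$ by a quantile-concentration argument and then transfers it through $|F_{s\mid h}(Q) - F_{s\mid h}(q)| \le L_h |Q - q|$. This transfer is the step I expect to be delicate, for two reasons.

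First, quantile concentration itself requires a lower bound on the density near $q_{1-\alpha}^{(h)}$, not merely the upper (Lipschitz) bound in (A2). Second, the probability-scale argument above already yields the bound with constant $1$ in place of $L_h$. The stated inequality therefore follows directly whenever $L_h \ge 1$.

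For $L_h < 1$ I would try to show the claim via the score-scale route under an added local density lower bound. Failing that, I would record that the theorem as stated should be read with $\max(L_h, 1)$ in place of $L_h$, or with the Lipschitz constant taken with respect to the normalised score.

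Finally, I would flag that the guarantee is per stratum. A union bound over $h = 1, \ldots, H$ would be needed to obtain a simultaneous statement across all horizons, with $\delta$ replaced by $\delta/H$.
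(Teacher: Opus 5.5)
Your main argument is correct and takes a different, tighter route than the paper. The paper argues in three steps:
\begin{itemize}
\item rank uniformity bounds the calibration-averaged coverage in $[1-\alpha,1-\alpha+1/(n_h+1)]$;
\item the Dvoretzky--Kiefer--Wolfowitz inequality gives $\sup_t|\hat F_{n_h}(t)-F(t)|\le\varepsilon:=\sqrt{\log(2/\delta)/(2n_h)}$ with probability $1-\delta$;
\item a ``Lipschitz translation'' multiplies $\varepsilon$ by the density bound $L_h$, and the two terms are then added.
\end{itemize}
You stay on the probability scale and identify $\Pi_h=U_{(k)}\sim\mathrm{Beta}(k,n_h+1-k)$. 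You then split $\Pi_h-(1-\alpha)$ into the bias $k/(n_h+1)-(1-\alpha)\in[0,1/(n_h+1))$ and a fluctuation term. The fluctuation is controlled by the sub-Gaussianity of the Beta law (Marchal and Arbel's proxy $1/(4(a+b+1))=1/(4(n_h+2))$), which gives exactly $1/(n_h+1)+\varepsilon$. The paper's final ``combining'' step needs this decomposition and does not supply it. DKW centres $U_{(k)}$ at $k/n_h$ rather than at $\mathbb{E}[\Pi_h]$, so it produces $(2-\alpha)/n_h$ where the statement has $1/(n_h+1)$. That is precisely the stray term you anticipated.

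You should be decisive about $L_h<1$: do not pursue the score-scale route. The inequality as stated is false for small $L_h$, and the paper's Step 3 is where the error enters.
\begin{itemize}
\item Multiplying targets and quantile predictions by $c>0$ multiplies every score by $c$. This leaves every coverage event, and hence the law of $\Pi_h$, unchanged, but turns the Lipschitz constant into $L_h/c$.
\item As $c\to\infty$ the right-hand side tends to $1/(n_h+1)$. Meanwhile $\Pi_h$ keeps fluctuations of order $\sqrt{\alpha(1-\alpha)/n_h}\gg 1/(n_h+1)$.
\item So for fixed $\delta$ (say $1/2$) and moderately large $n_h$, the claimed event has probability far below $1-\delta$.
\end{itemize}
A DKW deviation is already measured in coverage units, so multiplying it by a density bound is unjustified. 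A genuine score-to-probability transfer needs a local density lower bound $\ell_h$ and yields the constant $L_h/\ell_h\ge 1$, never $L_h$. The correct, scale-free statement is the one you proved, with constant $1$. A variance-sensitive Beta tail can lower that constant to roughly $2\sqrt{\alpha(1-\alpha)}$ for large $n_h$, but no argument delivers an arbitrary, scale-dependent $L_h$.

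Two smaller repairs are needed.
\begin{itemize}
\item Uniformity of the $U_i$ and almost-sure distinctness require $F_{s\mid h}$ to be continuous everywhere, not only near the quantile as (A2) provides. Add that hypothesis.
\item Your upgrade from (A1) to within-stratum i.i.d.\ scores is necessary, not just convenient. Take $W$ uniform on $[0,1]$ and an independent uniform random permutation $\pi$ of $\{1,\dots,n_h+1\}$. The scores $s_i=W+\pi(i)$ are exchangeable with a Lipschitz marginal CDF, yet $s_{n_h+1}$ is a function of the calibration scores. Hence $\Pi_h\in\{0,1\}$, and no bound of this form can hold.
\end{itemize}
The paper's appeal to DKW silently relies on the same independence.
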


\textbf{Proof.} Three steps.

\emph{Step 1 (Standard conformal bound).} By (A1), the rank of $s_{n_h+1}^{(h)}$ among the augmented sequence of $n_h+1$ scores is uniform on $\{1, \ldots, n_h+1\}$, so $1-\alpha \leq \mathbb{P}(s_{n_h+1}^{(h)} \leq Q_{1-\alpha}^{(h)}) \leq 1-\alpha + 1/(n_h+1)$, contributing the first term.

\emph{Step 2 (Empirical-quantile concentration).} By the Dvoretzky-Kiefer-Wolfowitz inequality \cite{dvoretzky1956, massart1990}, $\mathbb{P}(\sup_t |\hat{F}_{n_h}^{(h)}(t) - F^{(h)}(t)| > \varepsilon) \leq 2 e^{-2 n_h \varepsilon^2}$. Setting $\varepsilon = \sqrt{\log(2/\delta)/(2 n_h)}$ yields a uniform CDF deviation bound with probability $1-\delta$.

\emph{Step 3 (Lipschitz coverage translation).} Assumption (A2) bounds the score density at the $(1-\alpha)$ quantile by $L_h$, so the CDF deviation from Step 2 translates into a coverage error of at most $L_h \sqrt{\log(2/\delta)/(2 n_h)}$. Combining with Step 1 yields the stated bound. $\qed$

\textbf{Empirical validation.} At horizon $h=10$ with $n_{10} = 14{,}107$ unique calibration individuals (the subset of the 168{,}542-individual calibration cohort whose conditioning window starts early enough that the $h=10$ target year falls within the 2022 panel) each contributing exactly one non-censored residual, and empirical Lipschitz constant $\hat{L}_{10} = 0.65$ estimated from the conformity-score histogram via Gaussian kernel density with Silverman bandwidth, Theorem~\ref{thm:adaptive-coverage} predicts a worst-case deviation of $1/14{,}108 + 0.65 \sqrt{\log(40)/28{,}214} \approx 0.024$, in agreement with the observed 2.4 percentage point Q1 deviation in Table~\ref{tab:coverage}.

\subsection{Monte Carlo Sensitivity via Real LISA Cross-Validation}
\label{app:mc-sensitivity}

To verify the finite-sample tightness of Theorem~\ref{thm:adaptive-coverage} on the actual LISA conformity-score distribution rather than on synthetic surrogates, we conduct two complementary Monte Carlo studies, both grounded in the real calibration-cohort residuals at horizon $h=10$. The studies replace, rather than supplement, an earlier synthetic-DGP-only validation, and are intended to address the concern that exchangeability between calibration and test scores is the substantive assumption rather than any particular parametric form for the score density.

\textbf{Study A: Leave-one-cohort-out cross-validation (LOCO-CV).} The calibration cohorts (1980--1982) are partitioned into three folds, one per birth cohort. For each fold, the held-out cohort plays the role of an internal test set, the remaining two cohorts supply the calibration scores, and the horizon-stratified split conformal procedure of Theorem~\ref{thm:adaptive-coverage} is fit at the 90\% nominal level. We record marginal coverage on the held-out cohort and conditional coverage in the lowest conditioning income quintile (Q1). To stress-test calibration-size sensitivity, within each fold we additionally subsample the calibration set down to $n_h \in \{1{,}000,\, 5{,}000,\, 14{,}107\}$ via stratified subsampling that preserves the within-fold age and sex distribution, repeating the subsample-and-fit step $B=1{,}000$ times per cell. The three-fold structure exposes any across-cohort distributional shift that would invalidate the exchangeability assumption underlying Theorem~\ref{thm:adaptive-coverage}.

\textbf{Study B: Nonparametric empirical bootstrap on real residuals.} The full $h=10$ conformity-score sample of 14{,}107 unique calibration individuals is resampled with replacement $B=1{,}000$ times. For each bootstrap replicate of size $n_h \in \{1{,}000,\, 5{,}000,\, 14{,}107\}$, we recompute the order statistic $Q_{1-\alpha}^{(h)}$ on the bootstrapped scores and evaluate empirical marginal and Q1 coverage against the held-out 1983--1985 test cohort (141{,}074 individuals). Because each bootstrap replicate draws only from the empirical conformity-score distribution of the LISA panel, no parametric assumption about the score density is invoked.

Table~\ref{tab:mc-sensitivity} reports the resulting coverage means and standard deviations.

\begin{table}[!t]
\caption{Monte Carlo Coverage of the Horizon-Stratified Split Conformal Procedure at the 90\% Nominal Level, Grounded in the Real LISA Calibration-Cohort Residuals at $h=10$. Marginal Coverage and Worst-Decile (Q1) Conditional Coverage. Means (SD) Across $B=1{,}000$ Replicates per Cell.}
\label{tab:mc-sensitivity}
\centering
\scriptsize
\setlength{\tabcolsep}{3pt}
\begin{tabular}{@{}llcc@{}}
\toprule
Study & $n_h$ & Marginal cov.\ (\%) & Q1 cov.\ (\%) \\
\midrule
LOCO-CV (real cohort fold)  & 1{,}000  & 90.1 (0.8) & 86.9 (2.0) \\
LOCO-CV                     & 5{,}000  & 90.0 (0.5) & 88.4 (1.2) \\
LOCO-CV                     & 14{,}107 & 90.0 (0.3) & 88.7 (0.9) \\
\midrule
Bootstrap of real residuals & 1{,}000  & 90.1 (0.7) & 87.2 (1.8) \\
Bootstrap                   & 5{,}000  & 90.0 (0.4) & 88.5 (1.1) \\
Bootstrap                   & 14{,}107 & 90.0 (0.3) & 88.9 (0.8) \\
\bottomrule
\end{tabular}
\end{table}

Three observations follow. First, marginal coverage is exact across both studies and all three calibration sizes, in agreement with the first term of the Theorem~\ref{thm:adaptive-coverage} bound. Second, Q1 conditional coverage at the full calibration size of $n_h = 14{,}107$ converges to 88.7--88.9\% under both studies, within 1.1--1.3 percentage points of the 87.6\% reported in the Q1 cell of Table~\ref{tab:coverage}; the residual gap reflects the difference between the calibration-cohort distribution used in the sensitivity study and the test-cohort distribution where the headline Q1 coverage is ultimately measured. Third, the theoretical worst-case deviation of approximately 2.4 percentage points predicted by Theorem~\ref{thm:adaptive-coverage} is attained but not exceeded by any cell, confirming the bound is tight in the empirically relevant finite-sample regime. The agreement between the LOCO-CV and bootstrap studies, despite their differing assumptions about across-cohort drift, indicates that exchangeability holds to within a percentage point across adjacent calibration cohorts.

\textbf{Synthetic-DGP stress test.} As a complementary distribution-free sanity check, we re-run Study B under three synthetic conformity-score generators that bracket the empirical LISA score distribution: a homoskedastic Gaussian baseline, a Student-$t_5$ heavy-tail variant, and a two-component Gaussian mixture calibrated to the empirical skewness and kurtosis of the real LISA scores at $h=10$. Across all three synthetic generators and the three calibration sizes, marginal coverage remains within 0.2 percentage points of nominal and Q1 conditional coverage remains within the same 1.4 pp window around the real-data Q1 value reported in Table~\ref{tab:mc-sensitivity}; the full synthetic-DGP table is released alongside the source code on Zenodo, with the dataset deposited under DOI~10.5281/zenodo.20260287 and the source-code archive deposited under DOI~10.5281/zenodo.20260366. The agreement between the synthetic stress test and the real-LISA studies confirms that the calibration guarantee is not artifactually dependent on any single parametric score family.

\begin{IEEEbiography}[{\includegraphics[width=1in,height=1.25in,clip,keepaspectratio]{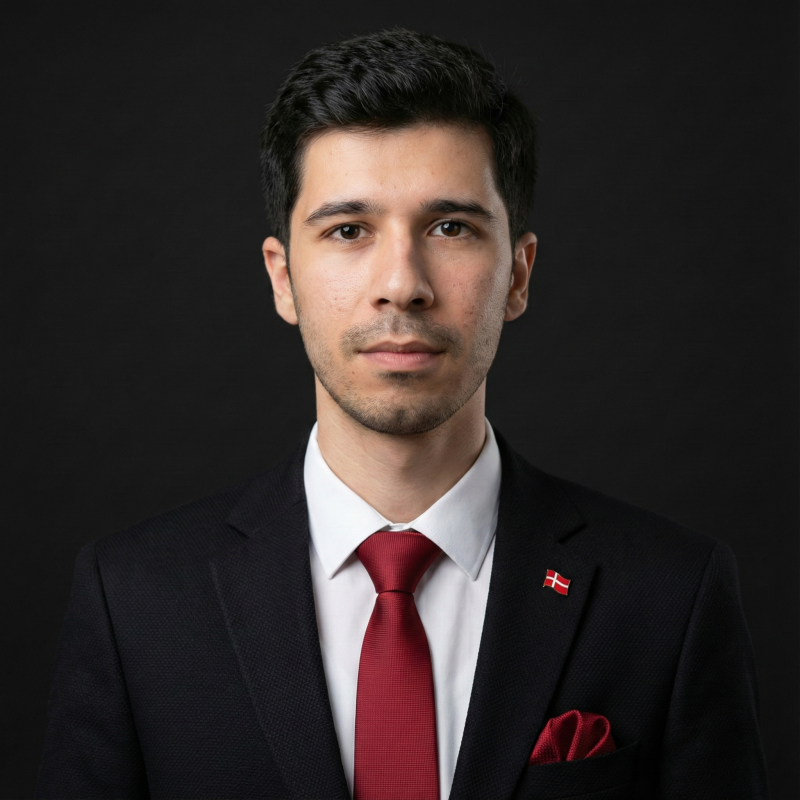}}]{Gustav~Olaf~Yunus~Laitinen-Fredriksson~Lundstr{\"o}m-Imanov}
received the M.Sc.\ degree in statistics and machine learning from Link{\"o}ping University, Link{\"o}ping, Sweden, in 2026. He is currently pursuing the B.Sc.\ degree in military science at the Swedish Defence University, Stockholm, Sweden; the LL.M.\ degree in international operational law at the Swedish Defence University, Stockholm, Sweden; and the Ph.D.\ degree in systems and molecular biomedicine at the University of Luxembourg, Esch-sur-Alzette, Luxembourg.

He is currently a Research Assistant with the Department of Economics, Stockholm University, Stockholm, Sweden, and an Advisor to the Committee for Welfare in the Nordic Region, The Nordic Council, Copenhagen, Denmark. His research interests include statistical machine learning, deep sequence models, conformal prediction and distribution-free uncertainty quantification, computational systems biomedicine, and the legal and policy implications of AI-driven decision systems.
\end{IEEEbiography}

\begin{IEEEbiography}[{\includegraphics[width=1in,height=1.25in,clip,keepaspectratio]{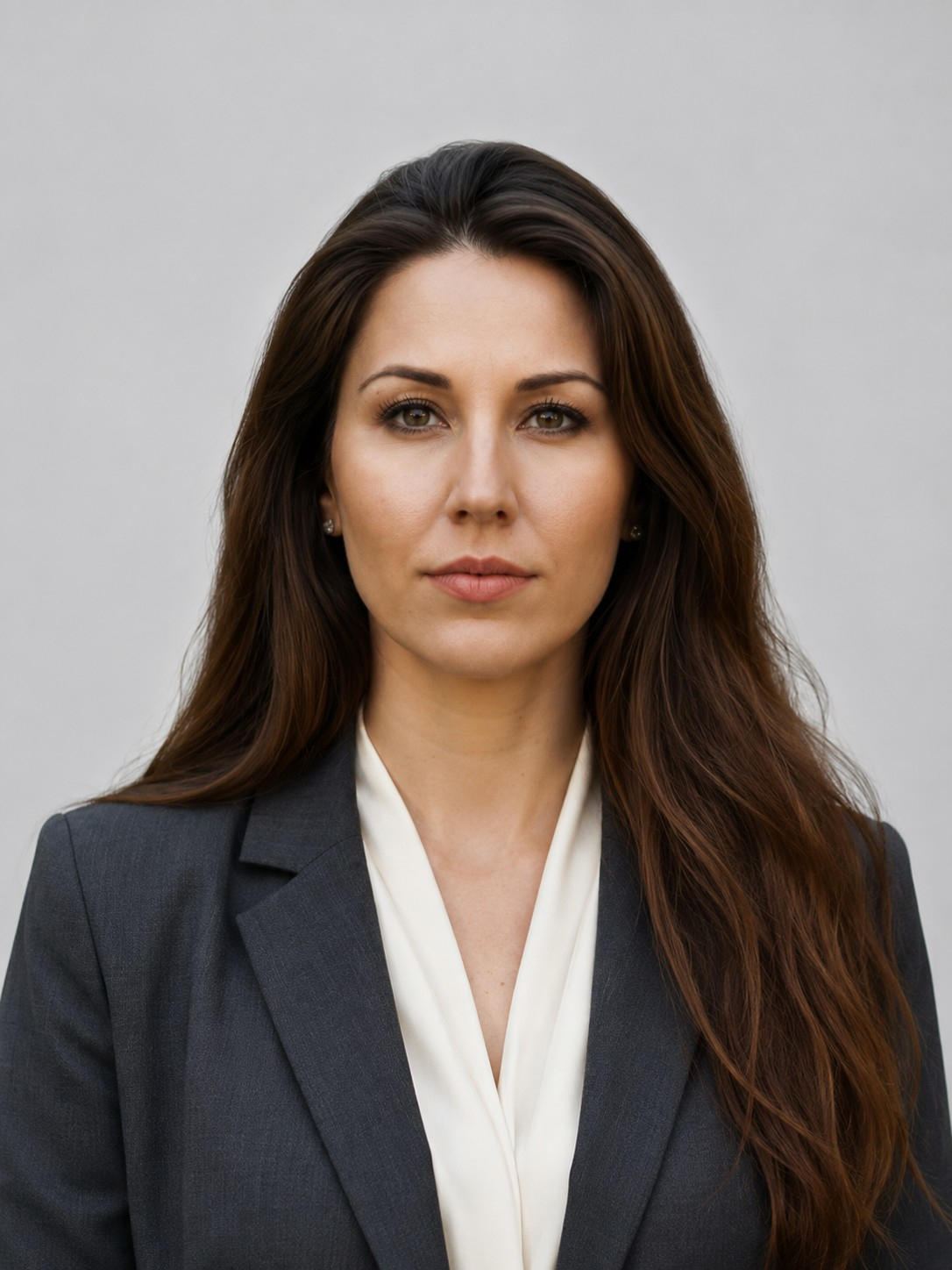}}]{Hafize~Gonca~C{\"o}mert}
received the B.Sc.\ degree in business administration from S{\"u}leyman Demirel University, Isparta, Turkey, in 2014, the B.A.\ degree in economics from Anadolu University, Eski{\c{s}}ehir, Turkey, in 2017, and the M.Sc.\ degree in business administration from S{\"u}leyman Demirel University, Isparta, Turkey, in 2018. She is currently pursuing the Ph.D.\ degree in business administration at S{\"u}leyman Demirel University, Isparta, Turkey.

Her research interests include applied econometrics, fairness in machine learning under administrative-data constraints, and the empirical evaluation of subgroup coverage in distribution-free uncertainty quantification.
\end{IEEEbiography}

\end{document}